\newtheorem{mydef}{Deﬁnition}[section]
\newtheorem{mytheo}{Theorem}[section]
\newtheorem{mylemma}{Lemma}
\begin{document}

\title{Low-Multi-Rank High-Order Bayesian Robust Tensor Factorization}

\author{Jianan~Liu and~Chunguang~Li,~\IEEEmembership{Senior Member,~IEEE}
\thanks{This work was supported by the National Key Research and Development Program of China (Grant No. 2018YFE0125400), the National Natural Science Foundation of China (Grant No. U20A20158), and the National Program for Special Support of Eminent Professionals. }
\thanks{The authors are with the College of Information Science and Electronic Engineering, Zhejiang University, Hangzhou 310027, China. C. Li is also with the Ningbo Research Institute, Zhejiang University, Ningbo 315100, China (C. Li is the corresponding author, email: cgli@zju.edu.cn). }

}

%



\maketitle

\begin{abstract}
  The recently proposed tensor robust principal component analysis (TRPCA) methods based on tensor singular value decomposition (t-SVD) have achieved numerous successes in many fields. However, most of these methods are only applicable to third-order tensors, whereas the data obtained in practice are often of higher order, such as fourth-order color videos, fourth-order hyperspectral videos, and fifth-order light-field images. Additionally, in the t-SVD framework, the multi-rank of a tensor can describe more fine-grained low-rank structure in the tensor compared with the tubal rank. However, determining the multi-rank of a tensor is a much more difficult problem than determining the tubal rank. Moreover, most of the existing TRPCA methods do not explicitly model the noises except the sparse noise, which may compromise the accuracy of estimating the low-rank tensor. In this work, we propose a novel high-order TRPCA method, named as Low-Multi-rank High-order Bayesian Robust Tensor Factorization (LMH-BRTF), within the Bayesian framework. Specifically, we decompose the observed corrupted tensor into three parts, i.e., the low-rank component, the sparse component, and the noise component. By constructing a low-rank model for the low-rank component based on the order-$d$ t-SVD and introducing a proper prior for the model, LMH-BRTF can automatically determine the tensor multi-rank. Meanwhile, benefiting from the explicit modeling of both the sparse and noise components, the proposed method can leverage information from the noises more effectivly, leading to an improved performance of TRPCA. Then, an efficient variational inference algorithm is established for parameters estimation. Empirical studies on synthetic and real-world datasets demonstrate the effectiveness of the proposed method in terms of both qualitative and quantitative results.
\end{abstract}

\begin{IEEEkeywords}
  Tensor factorization, tensor robust principal component analysis, order-$d$ tensor singular value decomposition, multi-rank determination, Bayesian inference.
\end{IEEEkeywords}

%
\IEEEpeerreviewmaketitle

\section{Introduction}

\IEEEPARstart{T}{ensors}, or multi-dimensional arrays, are a natural form of representation for data acquired in practice. Compared wtih matrices, tensors can more naturally and accurately characterize the intrinsic structural information in multi-dimensional data, and thus have received more attention in recent years across many fields such as statistics\cite{1Statis}, signal processing\cite{2SignalProces}, computer vision\cite{3CV} and machine learning\cite{4MachinLearn}.

In practice, the collected tensor is inevitably corrupted by sensor noise, communication errors, etc. To uncover the underlying information from the corrupted tensor, a common way is to try to infer the underlying low-rank structure from the corrupted observation with the help of low-rank tensor decomposition, which is often referred to as Tensor Robust Principal Component Analysis (TRPCA). Given an observed corrupted tensor $\mathcal{Y}\in \mathbf{R}^{I_1 \times \cdots \times I_d}$, in literature TRPCA methods generally assume that $\mathcal{Y} = \mathcal{X}+\mathcal{S}$ and try to infer $\mathcal{X}$ from $\mathcal{Y}$, where $\mathcal{X}$ is a tensor with some low-rank structure and $\mathcal{S}$ is a sparse tensor.

TRPCA methods vary with different tensor decomposition frameworks. There are many different forms of decomposition for tensors. One classical form of low-rank tensor decomposition is the CP decomposition, and the corresponding rank is the CP rank\cite{5CPrank}. As accurate estimation of the CP rank is intractable and this problem is NP-hard\cite{CPrankNP1,CPrankNP2,CPrankNP3}, existing CP-based TRPCA methods resort to the probabilistic framework to estimate the CP factor tensors and the CP rank\cite{BRTF,CPdecomp2}. For example, by introducing proper priors into the model, Bayesian Robust Tensor Factorization (BRTF)\cite{BRTF} achieves effective low-CP-rank tensor estimation and automatic CP-rank determination within the Bayesian framework. Another classical form of low-rank tensor decomposition is the Tucker decomposition, and the corresponding rank is the Tucker rank\cite{Tuckerrank1,Tuckerrank2}. Many Tucker-based TRPCA methods\cite{SNN,Tuckerconvex} use the sum of nuclear norms (SNNs) as a convex surrogate of the Tucker rank. Although CP decomposition and Tucker decomposition are very classical forms of tensor decomposition, they generally lack enough ﬂexibility in characterizing the low-rank structures of complex tensors.

Recently, many novel low-rank tensor decomposition frameworks have been proposed, such as tensor train\cite{TT1,TT2}, tensor ring\cite{TR1,TR2}, tensor singular value decomposition (t-SVD)\cite{tSVD1,tSVD2}, etc. Compared with other forms of tensor decomposition, t-SVD has been demonstrated to be more effective in characterizing the low-rank structural information of tensors with ﬁxed orientation or spatial-shifting, e.g., color images, videos, and multi-channel audio sequences\cite{tSVD2,tsvdAdvan1,tsvdAdvan2,tsvdAdvan3,tsvdAdvan4}. Therefore, many recently proposed TRPCA methods are based on t-SVD\cite{TNN,BTRTF,tSVDmethod1,tSVDmethod2,tSVDmethod3}. For example, in \cite{TNN}, the authors deduce the tensor nuclear norm (TNN) based on t-SVD and then formulate a convex optimization problem for TRPCA. In \cite{BTRTF}, the authors construct a Bayesian generative model, BTRTF, based on t-SVD and develop a variational inference algorithm to estimate the model parameters for effective TRPCA. However, most of the existing t-SVD-based TRPCA methods are only applicable to third-order tensors, whereas the data obtained in practice are often of higher order, such as fourth-order color videos, fourth-order hyperspectral videos, and fifth-order light-field images. To solve this problem, a trivial way is to reshape the higher-order tensors into matrices or third-order tensors and then use the corresponding TRPCA methods. However, such a strategy inevitably breaks the correlations among different dimensions of the original tensor, and consequently degrades the TRPCA performance. In addition, in the t-SVD framework, two new tensor ranks, i.e., tubal rank and multi-rank are defined, where the tubal rank is the largest element of the multi-rank\cite{tSVD1}. Accurately determining the multi-rank of a tensor rather than just the tubal rank helps us to capture the more fine-grained low-rank structure in the tensor, which improves the performance of algorithm. However, multi-rank determination is a much more difficult problem than tubal rank determination.

There are also t-SVD-based methods for higher-order tensors in the literature, e.g. \cite{HTNN}. However, it is not applicable to TRPCA. In addition, this method is unable to automatically determine the multi-rank but just the tubal rank of tensors. Moreover, most of the existing TRPCA methods do not explicitly model the noises except the sparse noise\cite{SNN,TNN,tSVDmethod2,tSVDmethod3}, which may compromise the accuracy of estimating the low-rank tensor. In view of these problems, we expect a new TRPCA method, which can be applicable to higher-order tensors, can automatically determine the multi-rank, and explicitly models the noises except the sparse noise.

In this work, we proposed a novel high-order TRPCA method, named as Low-Multi-Rank High-Order Bayesian Robust Tensor Factorization (LMH-BRTF). Specifically, we assume that the observed corrupted tensor $\mathcal{Y}$ can be decomposed into a low-rank component $\mathcal{X}$, a sparse component $\mathcal{S}$ and a noise component $\mathcal{E}$, i.e., $\mathcal{Y} = \mathcal{X} + \mathcal{S} + \mathcal{E}$. Based on this assumption, we construct a low-rank model for $\mathcal{X}$ based on the order-$d$ t-SVD\cite{HTNN,orderDtSVD1,orderDtSVD2,orderDtSVD3} so as to handle higher-order tensors, and introduce a proper prior for the model. Meanwhile, we explicitly model both the sparse component $\mathcal{S}$ and the noise component $\mathcal{E}$ using probabilistic model. Furthermore, we propose a Bayesian framework that can automatically determine the multi-rank of the tensor and leverage the information from the noises, thus the performance of TRPCA is improved. 

The contribution of this work is three-fold:
\begin{enumerate}
  \item We propose a high-order Bayesian TRPCA framework based on order-$d$ t-SVD, which can effectively infer the low-multi-rank tensor from the high-order corrupted observation tensor, and can automatically determine the multi-rank of the high-order tensor.
  \item Different from BTRTF\cite{BTRTF} (only applicable to the third-order tensors) where the model is specified in the original domain, since order-$d$ t-SVD is based on the invertible linear transforms and most of its definitions and operations are in the transform domain, we specify the main part of our model in the transform domain.
  \item As exact inference of the proposed Bayesian model is analytically intractable, we propose a variational inference algorithm that crosses the original domain and the transform domain for efﬁcient parameters estimation.
  
\end{enumerate}

\section{Notations and Preliminaries}
This section introduces notations used in this paper and the order-$d$ t-SVD framework.

\begin{table}[]
  \centering
  \caption{Summary of Notations}
  \label{tab:pre1}
  \resizebox{\columnwidth}{!}{%
  \begin{tabular}{ll}
  \hline\hline
  Notation                                                                  & Description                                   \\ \hline
  $\mathcal{X}\in   \mathbf{R}^{I_1 \times I_2 \times    \cdots \times I_d}$ & the order-$d$ tensor                           \\
  $\mathcal{X}^{\dagger}   \in \mathbf{R}^{I_2 \times I_1 \times \cdots \times  I_d}$ & the conjugate transpose of $\mathcal{X}$                   \\
  $\left\|\mathcal{X}\right\|_{F}$                                           & the Frobenius norm of $\mathcal{X}$            \\
  $\mathcal{X}_{i_1,   \cdots, i_d}$                                         & $(i_1, \cdots, i_d)$-th entry of $\mathcal{X}$ \\
  $\mathcal{X}_{:,i_2,\cdots,   i_d} \in \mathbf{R}^{I_1 \times 1}$          & the mode-1 fiber of $\mathcal{X}$              \\
  $\mathcal{X}_{i_1,:,\cdots,   i_d} \in \mathbf{R}^{1 \times I_2}$          & the mode-2 fiber of $\mathcal{X}$              \\
  $\mathbf{X}^{(i_3,   \cdots,i_d)} \in \mathbf{R}^{I_1 \times I_2}$                      & the slice along mode-1 and mode-2 of $\mathcal{X}$         \\
  $\mathbf{X}^{j}   \in \mathbf{R}^{I_{1} \times I_{2}}$                              & the $j$-th slice along mode-1 and mode-2 of $\mathcal{X}$  \\
  $\mathbf{X}_{(n)}\in   \mathbf{R}^{I_n \times \prod_{i \ne n} I_i} $                & the mode-$n$ unfolding of $\mathcal{X}$                    \\
  $\overline{\mathcal{X}}   \triangleq L(\mathcal{X})$                                & arbitrary invertible linear transform of $\mathcal{X}$ \\
  $L^{-1}(\mathcal{X})$                                                      & inverse linear transform of $\mathcal{X}$      \\
  $\operatorname{bdiag}(\mathcal{X}) $                                        & the block diagonal matrix of $\mathcal{X}$     \\
  $ \times_{n}$                                                              & the mode-$n$ product                           \\
  $\triangle$                                                                & the face-wise product                          \\
  $ *_{L}$                                                                   & the transforms $L$ based t-product             \\ \hline\hline
  \end{tabular}%
  }
\end{table}

\subsection{Notations}

In this paper, we use lowercase boldface (e.g. $\mathbf{x}$)  to denote vectors, uppercase boldface (e.g. $\mathbf{X}$) to denote matrices, and calligraphic letters (e.g. $\mathcal{X}$) to denote tensors. The ﬁelds of real and complex numbers are denoted as $\mathbf{R}$ and $\mathbf{C}$, respectively. We use $[d] = \{1,2,\cdots,d-1,d\}$ to denote the set of the first $d$ natural numbers, $\left\langle \cdot  \right\rangle$ to denote the expectation of a random variable, $\operatorname{tr}(\cdot)$ to denote the trace of a matrix, and $\mathbf{I}_{N}$ to denote an identity matrix sized $N \times N$. The diagonal matrix formed by a vector $\mathbf{x}$ is denoted as $\operatorname{diag}(\mathbf{x})$. For an order-$d$ tensor $\mathcal{X}\in \mathbf{R}^{I_1 \times  \cdots \times I_d}$, we use $\mathcal{X}_{i_1, \cdots, i_d}=\mathcal{X} (i_1, \cdots, i_d)$ to denote the $(i_1, \cdots, i_d)$-th entry of $\mathcal{X}$. We use $\mathcal{X}_{:,i_2,\cdots, i_d}=\mathcal{X}(:,i_2,\cdots,i_d)$ to denote the mode-1 fiber of $\mathcal{X}$ and $\mathcal{X}_{i_1,:,\cdots, i_d}=\mathcal{X}(i_1,:,\cdots,i_d)$ to denote the mode-2 fiber of $\mathcal{X}$. The slice along mode-1 and mode-2 of $\mathcal{X}$ is denoted as $\mathbf{X}^{(i_3, \cdots,i_d)} = \mathcal{X}(:,:,i_3, \cdots,i_d)$ and $\mathbf{X}^{j} \in \mathbf{R}^{I_{1} \times I_{2}}$ denotes the $j$-th slice along mode-1 and mode-2 of $\mathcal{X}$, i.e., $\mathbf{X}^{j}=\mathcal{X}\left(:,:, i_{3}, \cdots, i_{d}\right)$, where $j=\left(i_{d}-1\right) I_{3}\times \cdots \times I_{d-1}+\cdots+\left(i_{4}-1\right) I_{3}+i_{3}$. The mode-$n$ unfolding of $\mathcal{X}$ is denoted as $\mathbf{X}_{(n)}\in \mathbf{R}^{I_n \times \prod_{i \ne n} I_i} $ and the mode-$n$ product of matrix $\mathbf{U}$ and tensor $\mathcal{X}$ is denoted as $\mathcal{X} \times_{n} \mathbf{U}$, the meaning of which is  $\mathcal{Y}=\mathcal{X} \times_{n} \mathbf{U} \Leftrightarrow \mathbf{Y}_{(n)}=\mathbf{U} \cdot \mathbf{X}_{(n)} $.  The Frobenius norm of $\mathcal{X}$ is $\left\|\mathcal{X}\right\|_{F} =  \sqrt{\sum_{i_1 \cdots i_d}\left|\mathcal{X}_{i_1, \cdots, i_d}\right|^{2}}$. Table \ref{tab:pre1} summarizes the notations we use in this paper.

\subsection{Order-$d$ t-SVD Framework}

In this subsection, we briefly introduce the order-$d$ t-SVD framework. For more details, please refer to \cite{HTNN,orderDtSVD1,orderDtSVD2,orderDtSVD3}.

\begin{mydef}[Block diagonal matrix \cite{HTNN}]
  Given an order-$d$ tensor $\mathcal{X}\in \mathbf{R}^{I_1 \times I_2 \times  \cdots \times I_d}$, the block diagonal matrix of $\mathcal{X}$ is formed by 
  \begin{equation}
    \operatorname{bdiag}(\mathcal{X})=\operatorname{diag}\left(\mathbf{X}^{1}, \mathbf{X}^{2}, \cdots, \mathbf{X}^{J-1}, \mathbf{X}^{J}\right), 
  \end{equation}
 where $J=I_{3} \times \cdots \times I_{d}$ and $\operatorname{bdiag}(\mathcal{X}) \in \mathbf{R}^{I_{1} I_{3} \cdots I_{d} \times I_{2} I_{3} \cdots I_{d}}$.
\end{mydef}

\begin{mydef}[Face-wise product \cite{HTNN}]
  The face-wise product of two order-$d$ tensors $\mathcal{X}$ and $\mathcal{Y}$ is defined as 
  \begin{equation}
    \mathcal{Z}=\mathcal{X} \triangle \mathcal{Y} \Leftrightarrow \operatorname{bdiag}(\mathcal{Z})=\operatorname{bdiag}(\mathcal{X}) \cdot \operatorname{bdiag}(\mathcal{Y}). \label{face wide prod}
  \end{equation}
\end{mydef}

From (\ref{face wide prod}), we know that the face-wise product of two order-$d$ tensors $\mathcal{X}$ and $\mathcal{Y}$ can also be expressed as 
\begin{equation}
  \mathbf{Z}^{(i_3,\cdots, i_d)} = \mathbf{X}^{(i_3,\cdots, i_d)} \mathbf{Y}^{(i_3,\cdots, i_d)}, \quad \forall i_3,\cdots,i_d,
\end{equation}
where $\mathbf{Z}^{(i_3,\cdots, i_d)}$, $\mathbf{X}^{(i_3,\cdots, i_d)}$, and $ \mathbf{Y}^{(i_3,\cdots, i_d)}$ are the slice along mode-1 and mode-2 of $\mathcal{Z}$, $\mathcal{X}$, and $\mathcal{Y}$, respectively.

\begin{mydef}[Arbitrary invertible linear transform \cite{HTNN}]
  Given an order-$d$ tensor $\mathcal{X}\in \mathbf{R}^{I_1 \times I_2 \times  \cdots \times I_d}$, the arbitrary invertible linear transform of $\mathcal{X}$ is $\overline{\mathcal{X}} \triangleq L(\mathcal{X})$, where $L(\mathcal{X})=\mathcal{X} \times_{3} \mathbf{U}_{I_{3}} \times_{4} \mathbf{U}_{I_{4}} \cdots \times_{d} \mathbf{U}_{I_{d}}$ and $\{\mathbf{U}_{I_{j}} \in \mathbf{C}^{I_{j} \times I_{j}} \}_{j=3}^{d}$ are the transform matrices such as the Discrete Fourier Transformation (DFT) matrices. $L^{-1}(\mathcal{X})$ is the inverse linear transform of $\mathcal{X}$, i.e., $L^{-1}(\mathcal{X})=\mathcal{X} \times_{d} \mathbf{U}_{I_{d}}^{-1} \times_{d-1} \mathbf{U}_{I_{d}-1}^{-1} \cdots \times_{3} \mathbf{U}_{I_{3}}^{-1}$. 
\end{mydef}

From the definition of the arbitrary invertible linear transform, we know that $L^{-1}(L(\mathcal{X}))=\mathcal{X}$.

\begin{mydef}[Order-$d$ t-product \cite{HTNN}]
  The invertible linear transform $L$ based t-product of two given tensors $\mathcal{X} \in \mathbf{R}^{I_{1} \times l \times \cdots \times I_{d}}$ and $\mathcal{Y} \in \mathbf{R}^{l \times I_{2}\times  \cdots \times I_{d}}$ is defined as
  \begin{equation}
    \mathcal{Z} = \mathcal{X} *_L \mathcal{Y} = L^{-1} (\overline{\mathcal{X}} \triangle \overline{\mathcal{Y}}),
  \end{equation}

where $\mathcal{Z} \in \mathbf{R}^{I_1 \times I_2  \times \cdots \times I_d}$. 
\end{mydef}

From Definition II.1, II.2, II.3, II.4, we know that $\mathcal{Z} = \mathcal{X} *_L \mathcal{Y}$  is equivalent to $\operatorname{bdiag}(\overline{\mathcal{Z}}) =\operatorname{bdiag}(\overline{\mathcal{X}})\cdot\operatorname{bdiag}(\overline{\mathcal{Y}})$.

\begin{mydef}[Order-$d$ Conjugate transpose \cite{HTNN}]
  $\mathcal{X}^{\dagger} \in \mathbf{C}^{I_{2} \times I_{1}\times  \cdots \times I_{d}}$ is the conjugate transpose of the order-$d$ tensor $\mathcal{X} \in \mathbf{C}^{I_{1} \times I_{2}  \times\cdots \times I_{d}}$, if 
  \begin{equation}
    \overline{\mathcal{X}^{\dagger} }(:,:,i_3,\cdots, i_d) = \left(\overline{\mathcal{X}}\left(:,:,i_3,\cdots,i_d\right)\right)^{\dagger},
  \end{equation}
  for all $i_j \in [I_j], j \in \{3,...,d\}$. 
\end{mydef}

To obtain $\mathcal{X}^{\dagger}$, equivalently, we can conjugate transpose each frontal slice of $\mathcal{X}$, i.e. $\mathcal{X}(:,:,i_3,\cdots,i_d)$ and then reverse the order of the transposed frontal slices 2 through $I_k$ for each mode-$k$ where $k \in \{3,...,d \}$. 

In addition, from Definition II.1 and II.5, we know that $\operatorname{bdiag}(\overline{\mathcal{X}^{\dagger}}) = \operatorname{bdiag}(\overline{\mathcal{X}})^{\dagger}$.

\begin{mydef}[Identity tensor \cite{HTNN}]
  Order-$d$ identity tensor $\mathcal{I} \in \mathbf{R} ^{I \times I \times I_3 \times \cdots \times I_d}$ satisfies $\overline{\mathcal{I}}(:,:,i_3,\cdots,i_d) = \mathbf{I}_I$, for all $i_j \in [I_j], j \in \{3,...,d\}$.
\end{mydef}

\begin{mydef}[Orthogonal tensor \cite{HTNN}]
  Order-$d$ tensor $\mathcal{Q} \in \mathbf{C} ^{I \times I \times I_3 \times \cdots \times I_d}$ is said to be orthogonal, if it satisfies $\mathcal{Q}^{\dagger} *_{L} \mathcal{Q} = \mathcal{Q} *_{L} \mathcal{Q}^{\dagger} = \mathcal{I}$.
\end{mydef}

\begin{mydef}[Order-$d$ $f$-diagonal tensor\cite{HTNN}]
  Order-$d$ tensor $\mathcal{X} \in \mathbf{C} ^{I \times I \times I_3 \times \cdots \times I_d}$ is said to be $f$-diagonal, if $\mathcal{X}(:,:,i_3,\cdots,i_d)$ is a diagonal matrix, for all $i_j \in [I_j], j \in \{3,...,d\}$.
\end{mydef}

\begin{mytheo}[Order-$d$ t-SVD \cite{HTNN}]
  Given a real value tensor $\mathcal{X} \in \mathbf{R}^{I_1 \times I_2 \times I_3 \times \cdots \times I_d}$, it can be factorized as 
  \begin{equation}
    \mathcal{X} = \mathcal{U} *_{L} \mathcal{S} *_{L} \mathcal{V}^{\dagger},
  \end{equation}
  where $\mathcal{U} \in \mathbf{R}^{I_1 \times I_1 \times I_3 \times \cdots \times I_d}$ and $\mathcal{V} \in \mathbf{R}^{I_2 \times I_2 \times I_3 \times \cdots \times I_d}$ are orthogonal tensors, and $\mathcal{S} \in \mathbf{R}^{I_1 \times I_2 \times I_3 \times \cdots \times I_d}$ is a $f$-diagonal tensor which satisfies $\mathcal{S}(i_1,i_2,\cdots,i_d) = 0$ unless $i_1 = i_2$.
\end{mytheo}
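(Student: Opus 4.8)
The plan is to pass to the transform domain, where by the equivalence recorded just after Definition II.4 the $*_L$-product becomes an ordinary block-diagonal matrix product, and then to build the three factors one frontal slice at a time using the classical matrix SVD. Concretely, I would set $\overline{\mathcal{X}} = L(\mathcal{X})$ and, for every multi-index $(i_3,\cdots,i_d)$ with $i_j\in[I_j]$, take a matrix SVD of the slice $\overline{\mathcal{X}}(:,:,i_3,\cdots,i_d) = \mathbf{P}\,\mathbf{D}\,\mathbf{Q}^{\dagger}$ with $\mathbf{P}\in\mathbf{C}^{I_1\times I_1}$ and $\mathbf{Q}\in\mathbf{C}^{I_2\times I_2}$ unitary and $\mathbf{D}\in\mathbf{R}^{I_1\times I_2}$ diagonal with nonnegative entries (here $\mathbf{P},\mathbf{D},\mathbf{Q}$ all depend on $(i_3,\cdots,i_d)$). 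Collecting these blocks defines transform-domain tensors $\overline{\mathcal{U}},\overline{\mathcal{S}},\overline{\mathcal{V}}$ via $\overline{\mathcal{U}}(:,:,i_3,\cdots,i_d)=\mathbf{P}$, $\overline{\mathcal{S}}(:,:,i_3,\cdots,i_d)=\mathbf{D}$, $\overline{\mathcal{V}}(:,:,i_3,\cdots,i_d)=\mathbf{Q}$, and I would then set $\mathcal{U}=L^{-1}(\overline{\mathcal{U}})$, $\mathcal{S}=L^{-1}(\overline{\mathcal{S}})$, $\mathcal{V}=L^{-1}(\overline{\mathcal{V}})$.

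With the factors in hand, the factorization itself is a bookkeeping check in the transform domain. Since $\operatorname{bdiag}(\overline{\mathcal{U}})$, $\operatorname{bdiag}(\overline{\mathcal{S}})$, $\operatorname{bdiag}(\overline{\mathcal{V}})$ are block diagonal with the slice-SVD factors as blocks, and since $\operatorname{bdiag}(\overline{\mathcal{V}^{\dagger}}) = \operatorname{bdiag}(\overline{\mathcal{V}})^{\dagger}$, multiplying block by block reproduces $\operatorname{bdiag}(\overline{\mathcal{X}}) = \operatorname{bdiag}(\overline{\mathcal{U}})\operatorname{bdiag}(\overline{\mathcal{S}})\operatorname{bdiag}(\overline{\mathcal{V}^{\dagger}})$, which by the equivalence after Definition II.4 is exactly $\mathcal{X} = \mathcal{U}*_L\mathcal{S}*_L\mathcal{V}^{\dagger}$. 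Orthogonality of $\mathcal{U}$ and $\mathcal{V}$ is the same argument run on unitary blocks: $\mathbf{P}^{\dagger}\mathbf{P}=\mathbf{P}\mathbf{P}^{\dagger}=\mathbf{I}_{I_1}$ for every slice gives $\operatorname{bdiag}(\overline{\mathcal{U}^{\dagger}})\operatorname{bdiag}(\overline{\mathcal{U}}) = \operatorname{bdiag}(\overline{\mathcal{U}})\operatorname{bdiag}(\overline{\mathcal{U}^{\dagger}}) = \operatorname{bdiag}(\overline{\mathcal{I}})$, i.e. $\mathcal{U}^{\dagger}*_L\mathcal{U} = \mathcal{U}*_L\mathcal{U}^{\dagger} = \mathcal{I}$, and likewise for $\mathcal{V}$. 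For the $f$-diagonal structure of $\mathcal{S}$: each $\overline{\mathcal{S}}(:,:,i_3,\cdots,i_d)=\mathbf{D}$ is diagonal, so $\overline{\mathcal{S}}(i_1,i_2,\cdots,i_d)=0$ whenever $i_1\neq i_2$; because $L^{-1}$ is a composition of mode-$n$ products with $n\in\{3,\cdots,d\}$ only, it touches nothing in the first two indices and hence preserves this zero pattern, so $\mathcal{S}(i_1,i_2,\cdots,i_d)=0$ unless $i_1=i_2$.

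The genuinely delicate point — and the step I expect to be the main obstacle — is guaranteeing that $\mathcal{U},\mathcal{S},\mathcal{V}$ come out \emph{real} when the transform matrices $\mathbf{U}_{I_j}$ are complex, as with the DFT. For a real $\mathcal{X}$ the slices of $\overline{\mathcal{X}}$ are not independent: they obey a conjugate-symmetry relation pairing each slice with a ``mirror-indexed'' one, so if the per-slice SVDs are chosen independently then $L^{-1}(\overline{\mathcal{U}})$ need not be real. The fix I would use is to compute the SVD only on a set of representative slices and to define the factors on the mirrored slices by complex conjugation (legitimate since $\overline{\overline{\mathcal{X}}(:,:,i)}$ and $\overline{\mathcal{X}}(:,:,i)$ share singular values), exploiting the non-uniqueness of the SVD — phase and unitary-rotation freedom on coincident singular subspaces — to keep the choices consistent, with separate care for the self-symmetric slices (the DC slice, and a Nyquist slice when some $I_j$ is even), where a real-compatible SVD must be selected. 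Checking that the resulting $\overline{\mathcal{U}},\overline{\mathcal{S}},\overline{\mathcal{V}}$ inherit the conjugate symmetry and therefore map under $L^{-1}$ into $\mathbf{R}$ is the real work; for a general invertible $L$ one either imposes that $L$ admits such a symmetry class, or settles for the analogous statement over $\mathbf{C}$ while still arranging $\mathcal{S}$ to be real and nonnegative.
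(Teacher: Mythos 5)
The paper does not prove this theorem at all: it is imported verbatim from \cite{HTNN}, and your slice-wise SVD construction in the transform domain (assemble the per-slice factors, invert $L$, and read off orthogonality and $f$-diagonality block by block) is exactly the standard argument behind that cited result. Your identification of the realness of $\mathcal{U},\mathcal{S},\mathcal{V}$ under a complex transform such as the DFT as the one delicate point, resolved via the conjugate-symmetry pairing of slices (or by restricting to transforms admitting such a symmetry), is the correct and honest caveat rather than a gap.
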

We should note that the elements on the diagonal of the ﬁrst slice of $\mathcal{S}$, $\mathcal{S}(:,:,1,\cdots,1)$, have the decreasing property, i.e., 
\begin{equation}
  \mathcal{S}(1,1,1,\cdots,1) \ge \cdots \ge \mathcal{S}(m,m,1 \cdots ,1) \ge 0 ,
\end{equation}
where $m = \min (I_1,I_2)$. This property holds because
\begin{equation}
  \mathcal{S}(i, i, 1, \cdots, 1)=\frac{1}{\varphi} \sum_{i_{3}=1}^{I_{3}} \cdots \sum_{i_{d}=1}^{I_{d}} \overline{\mathcal{S}}\left(i, i, i_{3}, \cdots, i_{d}\right), \label{property1}
\end{equation}
where the constant $\varphi > 0$ is determined by the invertible linear transform $L$, and the elements on the diagonal of $\overline{\mathcal{S}}\left(i, i, i_{3}, \cdots, i_{d}\right)$ are referred to as the singular values of $\overline{\mathcal{X}}(:,:,i_3,\cdots,i_d)$.

\begin{mydef}[Order-$d$ tensor multi-rank \cite{HTNN}]
  Given a tensor $\mathcal{X} \in \mathbf{R}^{I_1 \times \cdots \times I_d}$, the multi-rank of $\mathcal{X}$ with respect to the invertible linear transform $L$ is a length-($I_3 \times \cdots \times I_d$) vector defined as 
  \begin{equation}
    \operatorname{rank}_{m}(\mathcal{X}) = \left( \operatorname{rank}\left(\overline{\mathbf{X}}^{1}\right), \cdots, \operatorname{rank}\left(\overline{\mathbf{X}}^{J}  \right) \right),
  \end{equation}
  where $J=I_{3} \times \cdots \times I_{d}$ , $\overline{\mathbf{X}}^{k}$ is the $k$-th slice along mode-1 and mode-2 of $\overline{\mathcal{X}} = L(\mathcal{X})$ and $\operatorname{rank}\left(\overline{\mathbf{X}}^{k}\right)$ is the rank of $\overline{\mathbf{X}}^{k}$.
\end{mydef}

\begin{mydef}[Order-$d$ tensor tubal rank \cite{HTNN}]
  Given a tensor $\mathcal{X} \in \mathbf{R}^{I_1 \times \cdots \times I_d}$ and let $\mathcal{S}$ be from the order-$d$ t-SVD of $\mathcal{X}$, the tubal rank of $\mathcal{X}$ is defined as 
  \begin{equation}
    \begin{aligned}
      \operatorname{rank}_{t}(\mathcal{X}) & =\sharp\{i: \mathcal{S}(i, i,:, \cdots,:) \neq \mathbf{0}\} \\
      & = \max_{k \in \left[J\right]} \,\operatorname{rank}\left(\overline{\mathbf{X}}^{k}\right),
      \end{aligned} \label{tubal rank}
  \end{equation}
  where $\sharp$ denotes the cardinality of a set, $J=I_{3} \times \cdots \times I_{d}$.
\end{mydef}
From the second equation of (\ref{tubal rank}), we know that the tensor tubal rank is the largest element of the tensor multi-rank.

\begin{mydef}[Order-$d$ skinny t-SVD \cite{HTNN}]
  Assume that the tubal rank of $\mathcal{X} \in \mathbf{R}^{I_1\times \cdots \times I_d}$ is $\operatorname{rank}_{t}(\mathcal{X})=r$, then the skinny t-SVD of $\mathcal{X}$ is $\mathcal{X} = \mathcal{U} *_{L} \mathcal{S} *_{L} \mathcal{V}^{\dagger}$, where $\mathcal{U} \in \mathbf{R}^{I_1\times r \times I_3 \times \cdots \times I_d}$ and $\mathcal{V} \in \mathbf{R}^{I_2\times r \times I_3 \times \cdots \times I_d}$ are orthogonal tensors, and $\mathcal{S} \in \mathbf{R}^{r\times r \times I_3 \times \cdots \times I_d}$ is a $f$-diagonal tensor.
\end{mydef}

\begin{mylemma}
  Given a tensor $\mathcal{X} \in \mathbf{R}^{I_1 \times I_2 \times \cdots \times I_d}$ whose tubal rank is $\operatorname{rank}_{t}(\mathcal{X})=r < \min(I_1,I_2)$ it can be factorized as the t-product of two smaller tensors, i.e., $\mathcal{X} = \mathcal{U} *_{L} \mathcal{V}^{\dagger}$, where $\mathcal{U} \in \mathbf{R}^{I_1 \times r \times \cdots \times I_d}$ and $\mathcal{V} \in \mathbf{R}^{I_2 \times r \times \cdots \times I_d}$.
\end{mylemma}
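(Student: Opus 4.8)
The plan is to reduce to the skinny t-SVD and then absorb the middle $f$-diagonal factor into one of the outer factors. Since $\operatorname{rank}_t(\mathcal{X}) = r$, I first invoke the order-$d$ skinny t-SVD to write $\mathcal{X} = \mathcal{U}_0 *_L \mathcal{S} *_L \mathcal{V}_0^{\dagger}$, where $\mathcal{U}_0 \in \mathbf{R}^{I_1 \times r \times I_3 \times \cdots \times I_d}$ and $\mathcal{V}_0 \in \mathbf{R}^{I_2 \times r \times I_3 \times \cdots \times I_d}$ are orthogonal and $\mathcal{S} \in \mathbf{R}^{r \times r \times I_3 \times \cdots \times I_d}$ is $f$-diagonal.

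Next I would record the associativity of $*_L$. Using the equivalence stated just after Definition~II.4, namely $\mathcal{Z} = \mathcal{A} *_L \mathcal{B} \Leftrightarrow \operatorname{bdiag}(\overline{\mathcal{Z}}) = \operatorname{bdiag}(\overline{\mathcal{A}})\cdot\operatorname{bdiag}(\overline{\mathcal{B}})$, together with the associativity of (block) matrix multiplication and the invertibility of $L$ (so that $\overline{\,\cdot\,}$ is a bijection and $L^{-1}(L(\cdot)) = \operatorname{id}$), one obtains $(\mathcal{A} *_L \mathcal{B}) *_L \mathcal{C} = \mathcal{A} *_L (\mathcal{B} *_L \mathcal{C})$. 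Hence $\mathcal{X} = (\mathcal{U}_0 *_L \mathcal{S}) *_L \mathcal{V}_0^{\dagger}$. I then set $\mathcal{U} := \mathcal{U}_0 *_L \mathcal{S}$ and $\mathcal{V} := \mathcal{V}_0$, so that $\mathcal{X} = \mathcal{U} *_L \mathcal{V}^{\dagger}$, which is the claimed factorization.

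It remains to check sizes and realness. By the size convention for the t-product, the product of tensors of sizes $I_1 \times r \times I_3 \times \cdots \times I_d$ and $r \times r \times I_3 \times \cdots \times I_d$ has size $I_1 \times r \times I_3 \times \cdots \times I_d$, so $\mathcal{U} \in \mathbf{R}^{I_1 \times r \times I_3 \times \cdots \times I_d}$ and $\mathcal{V} = \mathcal{V}_0 \in \mathbf{R}^{I_2 \times r \times I_3 \times \cdots \times I_d}$, with $r < \min(I_1,I_2)$ guaranteeing both factors are strictly smaller than $\mathcal{X}$ in their first two modes. The one point needing care is that $\mathcal{U}$ is real-valued even though the transform matrices $\{\mathbf{U}_{I_j}\}$ (e.g. DFT matrices) are complex: this follows because $\mathcal{U}_0, \mathcal{S}$ are real, so $\operatorname{bdiag}(\overline{\mathcal{U}_0})$ and $\operatorname{bdiag}(\overline{\mathcal{S}})$ inherit the conjugate-symmetry across frontal slices that characterizes transforms of real tensors, this property is preserved under block-multiplication, and $L^{-1}$ of such a conjugate-symmetric block-diagonal array is real.

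\textbf{Main obstacle.} Everything is bookkeeping with the block-diagonal characterization; the only nonroutine steps are (i) the associativity of $*_L$ and (ii) verifying that $*_L$ maps a pair of real tensors to a real tensor, i.e. that the imaginary parts introduced by the complex transform cancel. I expect (ii) to be the subtler of the two, and I would handle it via the conjugate-symmetry argument sketched above (or simply by appealing to the fact that the order-$d$ t-SVD in Theorem~II.1 already produces real $\mathcal{U}_0$, $\mathcal{V}_0$ when $\mathcal{X}$ is real, and that $\mathcal{U}_0 *_L \mathcal{S}$ is the same kind of real t-product). Note also that the hypothesis $r < \min(I_1,I_2)$ is not actually needed for the factorization itself — it holds with $r = \operatorname{rank}_t(\mathcal{X}) \le \min(I_1,I_2)$ regardless — and is used only to assert that $\mathcal{U}$ and $\mathcal{V}$ are genuinely smaller tensors.
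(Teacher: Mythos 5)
Your proof is correct and starts from the same place as the paper's---the order-$d$ skinny t-SVD---but absorbs the $f$-diagonal factor differently. The paper works entirely in the transform domain and splits the middle factor symmetrically, setting $\operatorname{bdiag}(\overline{\mathcal{U}}) = \operatorname{bdiag}(\overline{\mathcal{U}_{0}})\cdot\operatorname{bdiag}(\overline{\mathcal{S}_{0}})^{\frac{1}{2}}$ and $\operatorname{bdiag}(\overline{\mathcal{V}}) = \operatorname{bdiag}(\overline{\mathcal{V}_{0}})\cdot\operatorname{bdiag}(\overline{\mathcal{S}_{0}})^{\frac{1}{2}}$ and then checking $\operatorname{bdiag}(\overline{\mathcal{U}})\cdot\operatorname{bdiag}(\overline{\mathcal{V}})^{\dagger}=\operatorname{bdiag}(\overline{\mathcal{X}})$; you instead stay in the original domain and fold all of $\mathcal{S}$ into the left factor via associativity of $*_{L}$, taking $\mathcal{U}=\mathcal{U}_{0}*_{L}\mathcal{S}$ and $\mathcal{V}=\mathcal{V}_{0}$. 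Both routes give the claimed factorization with the correct sizes; yours is slightly more elementary (no matrix square root is needed), while the paper's symmetric split is the form it actually reuses later, e.g.\ the initialization $\langle\overline{\mathcal{U}}\rangle=\overline{\mathcal{U}_{0}}\triangle\overline{\mathcal{S}_{0}}^{\frac{1}{2}}$, $\langle\overline{\mathcal{V}}\rangle=\overline{\mathcal{V}_{0}}\triangle\overline{\mathcal{S}_{0}}^{\frac{1}{2}}$ in Section III.C, which balances the magnitudes of the two factors. You are also more careful than the paper about realness: the paper simply asserts $\mathcal{U}\in\mathbf{R}^{I_{1}\times r\times\cdots\times I_{d}}$ at the end, whereas your conjugate-symmetry remark (valid for DFT-type transforms, and implicit in the framework's closure of $*_{L}$ over real tensors) addresses the very point the paper leaves tacit. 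Your observation that $r<\min(I_{1},I_{2})$ is needed only so that the factors are genuinely smaller, not for the factorization itself, is likewise accurate.
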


\begin{proof}
  According to Definition II.11, we know that an order-$d$ tensor $\mathcal{X} \in \mathbf{R}^{I_1 \times I_2 \times \cdots \times I_d}$ with tubal rank $\operatorname{rank}_{t}(\mathcal{X})=r < \min(I_1,I_2)$ can be factorized as $\mathcal{X} = \mathcal{U}_0 *_{L} \mathcal{S}_0 *_{L} \mathcal{V}_0^{\dagger}$, where $\mathcal{U}_0 \in \mathbf{R}^{I_1\times r \times I_3 \times \cdots \times I_d}$ and $\mathcal{V}_0 \in \mathbf{R}^{I_2\times r \times I_3 \times \cdots \times I_d}$ are orthogonal tensors, and $\mathcal{S}_0 \in \mathbf{R}^{r\times r \times I_3 \times \cdots \times I_d}$ is a $f$-diagonal tensor. In the transform domain, equivilently, we have $\operatorname{bdiag}(\overline{\mathcal{X}}) =\operatorname{bdiag}(\overline{\mathcal{U}_{0}} ) \cdot \operatorname{bdiag}( \overline{\mathcal{S}_{0}}) \cdot \operatorname{bdiag}(\overline{\mathcal{V}_{0}})^{\dagger}$. Let $\operatorname{bdiag}(\overline{\mathcal{U}}) = \operatorname{bdiag}(\overline{\mathcal{U}_{0}})  \cdot \operatorname{bdiag}(\overline{\mathcal{S}_{0}})^{\frac{1}{2}}$, and $\operatorname{bdiag}(\overline{\mathcal{V}}) = \operatorname{bdiag}(\overline{\mathcal{V}_{0}})  \cdot \operatorname{bdiag}(\overline{\mathcal{S}_{0}})^{\frac{1}{2}}$, we have 
  \begin{equation}
    \begin{aligned}
      & \quad \operatorname{bdiag}(\overline{\mathcal{U}}) \cdot \operatorname{bdiag}(\overline{\mathcal{V}})^{\dagger} \\
      &=  \operatorname{bdiag}(\overline{\mathcal{U}_{0}})  \cdot \operatorname{bdiag}(\overline{\mathcal{S}_{0}})^{\frac{1}{2}} \cdot \operatorname{bdiag}(\overline{\mathcal{S}_{0}})^{\frac{1}{2}} \cdot \operatorname{bdiag}(\overline{\mathcal{V}_{0}})^{\dagger}\\
      &= \operatorname{bdiag}(\overline{\mathcal{U}_{0}})  \cdot \operatorname{bdiag}(\overline{\mathcal{S}_{0}}) \cdot \operatorname{bdiag}(\overline{\mathcal{V}_{0}})^{\dagger}\\
      &=\operatorname{bdiag}(\overline{\mathcal{X}}),
      \end{aligned}
  \end{equation}
  which means $\operatorname{bdiag}(\overline{\mathcal{X}}) = \operatorname{bdiag}(\overline{\mathcal{U}}) \cdot \operatorname{bdiag}(\overline{\mathcal{V}})^{\dagger}$. Therefore, we have $\mathcal{X} = \mathcal{U} *_{L} \mathcal{V}^{\dagger}$ in the original domain, where $\mathcal{U} \in \mathbf{R}^{I_1 \times r \times \cdots \times I_d}$ and $\mathcal{V} \in \mathbf{R}^{I_2 \times r \times \cdots \times I_d}$. 
\end{proof}

\section{Low-Multi-Rank High-Order BRTF Model}

\begin{figure}[tbp]  
  \centering  
  \includegraphics[width=8cm]{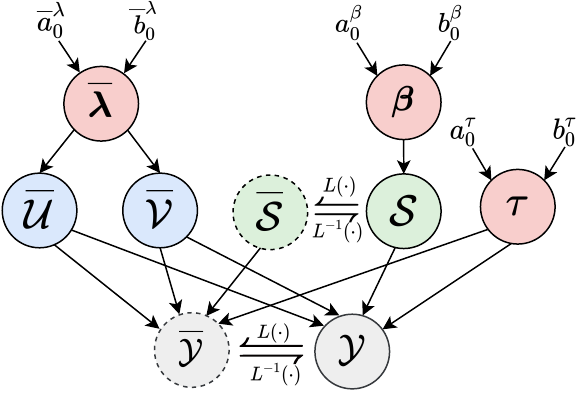} 
  \caption{Graphical model of the proposed LMH-BRTF model.}  
  \label{fig1}
\end{figure}

In this section, we first specify the LMH-BRTF model. Then, we derive a variational inference algorithm to efficiently estimate the model parameters.

\subsection{Model Speciﬁcation}
We assume that the observed corrupted tensor $\mathcal{Y} \in \mathbf{R}^{I_{1}\times I_2 \times \cdots \times I_{d}}$ can be decomposed into three parts \cite{BRTF}: the low-rank component $\mathcal{X} \in \mathbf{R}^{I_{1}\times I_2 \times \cdots \times I_{d}}$, the sparse component $\mathcal{S} \in \mathbf{R}^{I_{1}\times I_2 \times \cdots \times I_{d}}$, and the noise component $\mathcal{E} \in \mathbf{R}^{I_{1}\times I_2 \times \cdots \times I_{d}}$, that is, 
\begin{equation}
  \mathcal{Y} = \mathcal{X} + \mathcal{S} + \mathcal{E}, \label{sum assump}
\end{equation}
and our goal is to infer $\mathcal{X}$ from $\mathcal{Y}$. In the following, we model $\mathcal{X}$, $\mathcal{S}$, and $\mathcal{E}$ respectively and meanwhile introduce suitable priors accordingly for them. Then we specify the conditional distribution of $\mathcal{Y}$ and give the joint distribution of the LMH-BRTF model.

\textit{1) Probabilistic Modeling for $\mathcal{X}$}. By Lemma 1, we can factorize any order-$d$ tensor as the t-product of two smaller factor tensors. We decompose $\mathcal{X}$ as
\begin{equation}
  \mathcal{X} = \mathcal{U} *_{L} \mathcal{V}^{\dagger}=L^{-1} (\overline{\mathcal{U}} \triangle \overline{\mathcal{V}}^{\dagger}), \label{tsvd decom}
\end{equation}
where $\mathcal{U} \in \mathbf{R}^{I_1\times r \times I_3 \times \cdots \times I_d}$, $\mathcal{V} \in \mathbf{R}^{I_2\times r \times I_3 \times \cdots \times I_d}$, and $r \le \min(I_1,I_2)$ is the tubal rank of $\mathcal{X}$. In this way, we introduce a low-rank structure to $\mathcal{X}$.

In the transform domain, (\ref{tsvd decom}) is equivalent to 
\begin{equation}
  \overline{\mathcal{X}} = \overline{\mathcal{U}} \triangle \overline{\mathcal{V}}^{\dagger},
\end{equation}
that is,
\begin{equation}
  \overline{\mathbf{X}}^{k} = \overline{\mathbf{U}}^{k} \overline{\mathbf{V}}^{k \dagger},
\end{equation}
where $k \in [J]$, $J = I_3 \times \cdots \times I_d$. Since most of the definitions and operations of the order-$d$ t-SVD are actually in the transform domain, we directly specify the factor tensors, $\overline{\mathcal{U}}$ and $\overline{\mathcal{V}}$, in the transform domain.

Recall that one of our goals is to enable the model to automatically determine the multi-rank of the original domain tensor $\mathcal{X}$. However, multi-rank determination is a much more general and difﬁcult problem than tubal rank determination, as the tubal rank is just the largest element
of the multi-rank (see Definition II.10). In practice, the unknown multi-rank is generally considered as a hyper-parameter and may be tuned empirically, which should be too time-consuming as multi-rank is a length-($I_3 \times \cdots \times I_d$) vector for a tensor $\mathcal{X} \in \mathbf{R}^{I_1 \times \cdots \times I_d}$. Another way is to use heuristic multi-rank determination strategies such as that in \cite{heuristic MR deter}. However, the strategy in \cite{heuristic MR deter} fails when noises are significant. In contrast, we attempt to determine the multi-rank with the help of sparse Bayesian learning \cite{ARD}. Specifically, according to Definition II.9, determining the multi-rank of $\mathcal{X}$ is equivalent to determining the rank of each $\overline{\mathbf{X}}^{k} = \overline{\mathbf{U}}^{k} \overline{\mathbf{V}}^{k \dagger}$, which is controlled by the number of columns in $\overline{\mathbf{U}}^{k}$ and $\overline{\mathbf{V}}^{k}$, $ \forall k \in [J], J=I_3 \times \cdots \times I_d$. We introduce sparsity for each column in $\overline{\mathbf{U}}^{k}$ and $\overline{\mathbf{V}}^{k}$, so that the rank of each $\overline{\mathbf{X}}^{k}$ can be determined by removing those unnecessary columns in  $\overline{\mathbf{U}}^{k}$ and $\overline{\mathbf{V}}^{k}$. In particular, we place the ARD prior \cite{ARD} over the factor tensors $\overline{\mathcal{U}}$ and $\overline{\mathcal{V}}$ as follows:
\begin{align}
  p(\overline{\mathcal{U}}\mid \overline{\boldsymbol{\lambda}}) &= \prod_{i_1 r i_3 \cdots i_d} \mathcal{N}\left(\overline{\mathcal{U}}_{i_1,r,\cdots,i_d}\mid 0 , \overline{\lambda}_{r}^{(i_3,\cdots,i_d)-1}\right), \\
  p(\overline{\mathcal{V}}\mid \overline{\boldsymbol{\lambda}}) & =\prod_{i_2 r i_3 \cdots i_d }\mathcal{N}\left(\overline{\mathcal{V}}_{i_2,r,\cdots,i_d}\mid 0 , \overline{\lambda}_{r}^{(i_3,\cdots,i_d)-1}\right),
\end{align}
where $\overline{\lambda}_{r}^{(i_3,\cdots,i_d)}$ is the precision of the $r$-th column in $\overline{\mathbf{U}}^{(i_3,\cdots,i_d)}$ and $\overline{\mathbf{V}}^{(i_3,\cdots,i_d)}$. Further, to simply our analysis, we apply the conjugate distribution of the Gaussian distribution, i.e., the Gamma distribution, to each $\overline{\lambda}_{r}^{(i_3,\cdots,i_d)}$, which is given by
\begin{equation}
  p(\overline{\boldsymbol{\lambda}}) = \prod_{ri_3\cdots i_d}\operatorname{Ga}(\overline{\lambda}_{r}^{(i_3,\cdots,i_d)}\mid \overline{a}_{0}^{\lambda},\overline{\, b}_{0}^{\lambda}), \label{lambda prior}
\end{equation}
where $\operatorname{Ga}(x \mid a,b) = \frac{b^{a}x^{a-1}e^{-bx}}{\Gamma(a)}$ with $\Gamma(a)$ being the Gamma function, and $a$ and $b$ are the hyper-parameters.

With this form of prior specification, those $\overline{\lambda}_{r}^{(i_3,\cdots,i_d)}$ corresponding to the unnecessary columns in $\overline{\mathbf{U}}^{(i_3,\cdots,i_d)}$ and $\overline{\mathbf{V}}^{(i_3,\cdots,i_d)}$ will tend to have large values, so as to drive these unnecessary columns to be $\mathbf{0}$. In this way, we are able to determine the multi-rank of $\mathcal{X}$  by retaining only the minimum number of columns in each $\overline{\mathbf{U}}^{(i_3,\cdots,i_d)}$ and $\overline{\mathbf{V}}^{(i_3,\cdots,i_d)}$.

\textit{2) Probabilistic Modeling for $\mathcal{S}$.} The sparse component $\mathcal{S}$ captures the sparse noise in the observed corrupted tensor $\mathcal{Y}$ in the original domain, and such noise is also known as the outlier. Mathematically, most elements in $\mathcal{S}$ are 0, with only a few having large magnitudes. As the sparse component $\mathcal{S}$ has its specific physical meaning in the original domain and the physical meaning is not easy to be described in the transform domain, we specify the sparse component $\mathcal{S}$ in the original domain. Specifically, we introduce independent Gaussian distribution for each element of $\mathcal{S}$, which is given by
\begin{equation}
  p(\mathcal{S}\mid \boldsymbol{\beta}) = \prod_{i_1\cdots i_d} \mathcal{N}\left(\mathcal{S}_{i_1, \cdots, i_d} \mid 0, \beta_{i_1,\cdots ,i_d}^{-1}\right),
\end{equation}
where $\boldsymbol{\beta} = \{\beta_{i_1,\cdots,i_d}\}$, and $\beta_{i_1,\cdots,i_d}$ is the precision of the $(i_1,\cdots, i_d)$-th element of $\mathcal{S}$. Further, we impose on each $\beta_{i_1,\cdots,i_d}$ the conjugate distribution of the Gaussian distribution, i.e., the Gamma distribution, which is given by
\begin{equation}
  p(\boldsymbol{\beta}) = \prod_{i_1\cdots i_d} \mathrm{Ga}\left(\beta_{i_1, \cdots, i_d} \mid a_{0}^{\beta}, b_{0}^{\beta}\right). \label{beta prior}
\end{equation}

Note that as $\beta_{i_1,\cdots,i_d}$ becomes large, the corresponding $\mathcal{S}_{i_1, \cdots, i_d}$ will tend to be 0. By encouraging most of the precision variables to take on large values, the corresponding $\mathcal{S}$ is capable of characterizing the sparse noise.

\textit{3) Probabilistic Modeling for $\mathcal{E}$}. Similar to the sparse component $\mathcal{S}$, the noise component $\mathcal{E}$ also has its physical meaning in the origianl domain that it absorbs the noises except the sparse noise. We also specify the noise component $\mathcal{E}$ in the original domain. In this paper, we use the independent zero-mean Gaussian distribution to model the noise component $\mathcal{E}$. Different from the modeling of the sparse component $\mathcal{S}$, we assume that the precision of each element in $\mathcal{E}$ is identical, i.e.,
\begin{equation}
  p(\mathcal{E}\mid \tau ) = \prod_{i_1\cdots i_d} \mathcal{N}(\mathcal{E}_{i_1,\cdots, i_d} \mid 0,\tau^{-1}),
\end{equation}
where $\tau$ is the noise precision. Similar to (\ref{lambda prior}) and (\ref{beta prior}), we impose on $\tau$ the conjugate distribution of the Gaussian distribution, i.e., the Gamma distribution, which is given by
\begin{equation}
  p(\tau) = \operatorname{Ga}\left(\tau \mid a_{0}^{\tau}, b_{0}^{\tau}\right).
\end{equation}

In general, both $a_{0}^{\tau}$ and $ b_{0}^{\tau}$ are set to very small values to introduce non-informative priors.

\textit{4) Conditional Distribution of $\mathcal{Y}$}. By (\ref{sum assump}), (\ref{tsvd decom}) and the modeling above, given the model parameters, the conditional distribution of the observed corrupted tensor $\mathcal{Y}$ is
\begin{align}
  &p(\mathcal{Y}\mid \overline{\mathcal{U}},\overline{\mathcal{V}},\mathcal{S},\tau) = \notag \\
  & \prod_{i_1  \cdots i_d} \mathcal{N}\left(\mathcal{Y}_{i_1, \cdots, i_d}\mid L^{-1}\left(\overline{\mathcal{U}} \triangle \overline{\mathcal{V}}^{\dagger} \right)_{i_1 ,\cdots, i_d}+\mathcal{S}_{i_1, \cdots, i_d}, \tau^{-1} \right).
\end{align}

Within the algebraic framework of the order-$d$ t-SVD, the original domain and the transform domain are connected by the invertible linear transform $L$. Through the linear transform $L$, we know that
\begin{align}
  \overline{\mathcal{Y}} &=\overline{\mathcal{X}} + \overline{\mathcal{S}} + \overline{\mathcal{E}},\\
  &= \overline{\mathcal{U}} \triangle \overline{\mathcal{V}}^{\dagger}  + \overline{\mathcal{S}} + \overline{\mathcal{E}}.
\end{align}
Therefore, the conditional distribution can also be given in the transform domain as 
\begin{align}
  &p(\overline{\mathcal{Y}}\mid \overline{\mathcal{U}},\overline{\mathcal{V}},\overline{\mathcal{S}},\tau) = \notag \\
  &\prod_{i_1\cdots i_d} \mathcal{N}\left(\overline{\mathcal{Y}}_{i_1,\cdots, i_d}\mid \overline{\mathcal{U}}_{i_1,:,\cdots,i_d} \cdot \overline{\mathcal{V}}^{\dagger}_{i_2,:,\cdots,i_d}+\overline{\mathcal{S}}_{i_1, \cdots, i_d}, \varphi \tau^{-1} \right),
\end{align}
where the constant $\varphi$ is determined by the invertible linear transform $L$. For example, $\varphi = I_3\times \cdots \times I_d$ if $L$ is the discrete Fourier transformation.

\textit{5) Joint Distribution of LMH-BRTF}. Based on the model specification above, we can obtain the joint distribution in the original domain and in the transform domain as  
\begin{align}
  p(\mathcal{Y}, \mathbf{\Theta}) &= p(\mathcal{Y}\mid \overline{\mathcal{U}},\overline{\mathcal{V}},\mathcal{S},\tau)p(\overline{\mathcal{U}}\mid \overline{\boldsymbol{\lambda}})p(\overline{\mathcal{V}} \mid \overline{\boldsymbol{\lambda}})\notag \\
  & \qquad \qquad  \qquad \quad \times  p(\mathcal{S} \mid \boldsymbol{\beta})p(\overline{\boldsymbol{\lambda}})p(\boldsymbol{\beta})p(\tau),
\end{align}
and 
\begin{align}
  p(\overline{\mathcal{Y}}, \mathbf{\Theta}) &= p(\overline{\mathcal{Y}}\mid \overline{\mathcal{U}},\overline{\mathcal{V}},\overline{\mathcal{S}},\tau)p(\overline{\mathcal{U}}\mid \overline{\boldsymbol{\lambda}})p(\overline{\mathcal{V}} \mid \overline{\boldsymbol{\lambda}})\notag \\
  & \qquad \qquad  \qquad \quad \times p(\mathcal{S} \mid \boldsymbol{\beta})p(\overline{\boldsymbol{\lambda}})p(\boldsymbol{\beta})p(\tau),
\end{align}
respectively, where $\mathbf{\Theta} = \{\overline{\mathcal{U}},\overline{\mathcal{V}},\overline{\boldsymbol{\lambda}},\mathcal{S},\boldsymbol{\beta},\tau\}$ is the set of latent variables in the LMH-BRTF model. Correspondingly, the log of the joint distribution in the original domain is given by
\begin{align}
  &\ln p(\mathcal{Y}, \mathbf{\Theta}) = \frac{I_1 \cdots I_d}{2} \ln \tau \notag\\
  &+ \sum_{i_1 \cdots i_d} \left[  -\frac{\tau}{2} \left( \mathcal{Y}_{i_1, \cdots, i_d} - \left( \overline{\mathcal{U}} \triangle \overline{\mathcal{V}}^{\dagger} \right)_{i_1 ,\cdots, i_d} - \mathcal{S}_{i_1, \cdots, i_d}  \right)^2 \right]\notag\\
  &+ \sum_{i_1 r i_3\cdots i_d} \left[\frac{1}{2} \ln \overline{\lambda}_{r}^{(i_3,\cdots,i_d)} - \frac{\overline{\lambda}_{r}^{(i_3,\cdots,i_d)}}{2}\overline{\mathcal{U}}_{i_1,r,\cdots,i_d}^{2} \right] \notag\\
  &+ \sum_{i_2 r i_3\cdots i_d} \left[\frac{1}{2} \ln \overline{\lambda}_{r}^{(i_3,\cdots,i_d)} - \frac{\overline{\lambda}_{r}^{(i_3,\cdots,i_d)}}{2}\overline{\mathcal{V}}_{i_2,r,\cdots,i_d}^{2} \right]\notag\\
  &+ \sum_{i_1 \cdots i_d} \left[\frac{1}{2} \ln \beta_{i_1, \cdots, i_d } - \frac{\beta_{i_1, \cdots,i_d}}{2}\mathcal{S}_{i_1,\cdots,i_d}^{2} \right] \notag \\
  &+\sum_{ri_3\cdots i_d} \left[(\overline{a}_{0}^{\lambda}-1)\ln \overline{\lambda}_{r}^{(i_3,\cdots,i_d)} -  \overline{\, b}_{0}^{\lambda} \overline{\lambda}_{r}^{(i_3,\cdots,i_d)}   \right]  \notag \\
  &+ \sum_{i_1  \cdots i_d}  \left[(a_{0}^{\beta}-1) \ln \beta_{i_1, \cdots, i_d} - b_{0}^{\beta} \beta_{i_1, \cdots, i_d} \right]\notag\\
  &+(a_{0}^{\tau}-1) \ln \tau - b_{0}^{\tau} \tau + \text{const}, \label{original log}
\end{align}
and that in the transform domain is 
\begin{align}
  &\ln p(\overline{\mathcal{Y}}, \mathbf{\Theta}) = \frac{I_1 \cdots I_d}{2} \ln \tau \notag \\
  &+ \sum_{i_1 \cdots i_d} \left[ -\frac{\tau}{2 \varphi} \left(\overline{\mathcal{Y}}_{i_1,\cdots ,i_d} - \overline{\mathcal{U}}_{i_1,:,\cdots,i_d} \cdot \overline{\mathcal{V}}^{\dagger}_{i_2,:,\cdots,i_d}-\overline{\mathcal{S}}_{i_1, \cdots, i_d}  \right)^2 \right] \notag\\
  &+ \sum_{i_1 r i_3\cdots i_d} \left[\frac{1}{2} \ln \overline{\lambda}_{r}^{(i_3,\cdots,i_d)} - \frac{\overline{\lambda}_{r}^{(i_3,\cdots,i_d)}}{2}\overline{\mathcal{U}}_{i_1,r,\cdots,i_d}^{2} \right]\notag\\
  &+ \sum_{i_2 r i_3\cdots i_d} \left[\frac{1}{2} \ln \overline{\lambda}_{r}^{(i_3,\cdots,i_d)} - \frac{\overline{\lambda}_{r}^{(i_3,\cdots,i_d)}}{2}\overline{\mathcal{V}}_{i_2,r,\cdots,i_d}^{2} \right]\notag\\
  &+ \sum_{i_1 \cdots i_d} \left[\frac{1}{2} \ln \beta_{i_1, \cdots, i_d } - \frac{\beta_{i_1, \cdots,i_d}}{2}\mathcal{S}_{i_1,\cdots,i_d}^{2} \right] \notag \\
  &+\sum_{ri_3\cdots i_d} \left[(\overline{a}_{0}^{\lambda}-1)\ln \overline{\lambda}_{r}^{(i_3,\cdots,i_d)} -  \overline{\, b}_{0}^{\lambda} \overline{\lambda}_{r}^{(i_3,\cdots,i_d)} \right]    \notag \\
  &+ \sum_{i_1  \cdots i_d} \left[(a_{0}^{\beta}-1) \ln \beta_{i_1, \cdots, i_d} - b_{0}^{\beta} \beta_{i_1, \cdots, i_d} \right]\notag\\
  &+(a_{0}^{\tau}-1) \ln \tau - b_{0}^{\tau} \tau + \text{const}. \label{transform log}
\end{align}

The graphical model of the LMH-BRTF model is shown in Fig.\ref{fig1}.

\subsection{Model Learning via Variational Inference}

With the model specification above, theoretically LMH-BRTF can be learned by calculating the posterior of the variables $p(\mathbf{\Theta} \mid \mathcal{Y}) = \frac{p(\mathcal{Y},\mathbf{\Theta})}{\int p(\mathcal{Y},\mathbf{\Theta})d\mathbf{\Theta}}$ (or $p(\mathbf{\Theta} \mid \overline{\mathcal{Y}})$). However, as the posterior distribution is analytically intractable, we resort to variational inference methods \cite{VB1,VB2} to approximate the posterior distribution. Specifically, we construct a variational distribution $q(\mathbf{\Theta})$ and measure its dissimilarity to the true posterior $p(\mathbf{\Theta}\mid \mathcal{Y})$ by the KL divergence $\operatorname{KL}(q(\mathbf{\Theta})|| p(\mathbf{\Theta}\mid \mathcal{Y}))$. Then our goal is to 
\begin{equation}
  \min \text{KL}[q(\mathbf{\Theta}) \| p(\mathbf{\Theta} \mid \mathcal{Y})], \label{KL div}
\end{equation}
which is equivalent to maximizing the evidence lower bound $\mathcal{L}(q)=\int q(\mathbf{\Theta}) \log p(\mathcal{Y}, \mathbf{\Theta}) d \mathbf{\Theta}-\int q(\mathbf{\Theta}) \log q(\mathbf{\Theta}) d \mathbf{\Theta}$.

For the variational distribution $q(\mathbf{\Theta})$, the mean-field assumption is generally introduced, which assumes that the elements of $\mathbf{\Theta}$ can be partitioned into disjoint groups $\{\mathbf{\Theta}_j\}$, and $q(\mathbf{\Theta})$ can be factorized with respect to these disjoint groups:  
\begin{equation}
  q(\mathbf{\Theta}) = \prod_j q(\mathbf{\Theta}_j). \label{mean-field assumption}
\end{equation}

By substituting (\ref{mean-field assumption}) into (\ref{KL div}) and minimize the KL divergence between the variational posterior $q(\mathbf{\Theta})$ and the true posterior $ p(\mathbf{\Theta} \mid \mathcal{Y})$, we can obtain the general expression for the optimal solution of (\ref{KL div}) with regard to the $j$-th variable set $\boldsymbol{\Theta}_{j}$, which is given by
\begin{equation}
\ln q_{j}\left(\boldsymbol{\Theta}_{j}\right) \propto\langle\ln p(\mathcal{Y}, \boldsymbol{\Theta})\rangle_{\boldsymbol{\Theta} \backslash \boldsymbol{\Theta}_{j}}, \label{VB optimal}
\end{equation}
where $\langle \rangle_{\boldsymbol{\Theta} \backslash \boldsymbol{\Theta}_{j}}$ denotes an expectation with respect to the variational distribution $q(\mathbf{\Theta})$ except the latent variables $\boldsymbol{\Theta}_{j}$. For the LMH-BRTF model, each $\boldsymbol{\Theta}_{j}$ can be specified as one of the elements in $\{\overline{\mathcal{U}},\overline{\mathcal{V}},\overline{\boldsymbol{\lambda}},\mathcal{S},\boldsymbol{\beta},\tau\}$, which is the set of latent variables in LMH-BRTF and the relationship among these latent variables is shown in Fig.\ref{fig1}. We can then re-write (\ref{mean-field assumption}) as 
\begin{equation}
  q(\mathbf{\Theta}) = q(\overline{\mathcal{U}})q(\overline{\mathcal{V}})q(\overline{\boldsymbol{\lambda}})q\left(\mathcal{S}\right)q(\boldsymbol{\beta}) q(\tau). \label{LMH-BRTF mean-field}
\end{equation}

Note that the LMH-BRTF model has both latent variables in the original domain and latent variables in the transform domain, simply using the log of the joint distribution in the original domain (\ref{original log}) or that in the transform domain (\ref{transform log}) is not sufficient when applying (\ref{VB optimal}) to develop the variational inference algorithm for LMH-BRTF. In order to effectively estimate the parameters of LMH-BRTF, we propose a variational inference algorithm that crosses the original domain and the transform domain. In the following, we present the variational inference algorithm for LMH-BRTF in detail.

\textit{1) Variational Posterior of $\overline{\mathcal{U}}$ and $\overline{\mathcal{V}}$}. As the factor tensor $\overline{\mathcal{U}}$ is specified in the transform domain, we use the log of the joint distribution in the transfrom domain (\ref{transform log}). Let $\boldsymbol{\Theta}_{j} = \overline{\mathcal{U}}$ and apply (\ref{VB optimal}), the log of the optimized factor is given by
\begin{align}
  & \ln q\left(\overline{\mathcal{U}} \right) = \sum_{i_1 i_3 \cdots i_d} \left[-\frac{1}{2}\overline{\mathcal{U}}_{i_1,:,\cdots,i_d} \left(\frac{\left\langle\tau\right\rangle}{\varphi} \left\langle\overline{\mathbf{V}}^{\dagger (i_3,\cdots,i_d)}\overline{\mathbf{V}}^{(i_3,\cdots,i_d)}  \right\rangle \right. \right. \notag \\ 
  &  \left. \qquad   +\operatorname{diag}(\left\langle\overline{\boldsymbol{\lambda}}^{(i_3,\cdots,i_d)}\right\rangle)\right)\overline{\mathcal{U}}^{\top}_{i_1,:,\cdots,i_d} \notag \\
  & \left. + \frac{\left\langle\tau \right\rangle}{\varphi}\left( \overline{\mathcal{Y}}_{i_1,:,\cdots,i_d} - \overline{\mathcal{S}}_{i_1,:,\cdots,i_d} \right)  \left\langle \overline{\mathbf{V}}^{(i_3,\cdots,i_d)}\right\rangle \overline{\mathcal{U}}^{\top}_{i_1,:,\cdots,i_d} \right]\notag \\
  & + \text{const}.  
\end{align}

Accordingly, the variational posterior of $q\left(\overline{\mathcal{U}}\right)$ follows the product of Gaussian distributions and can be obtained as 
\begin{equation}
  q(\overline{\mathcal{U}} ) = \prod_{i_1 i_3 \cdots i_d}\mathcal{N}\left(\overline{\mathcal{U}}_{i_1,:,\cdots,i_d} \mid \left\langle\overline{\mathcal{U}}_{i_1,:,\cdots,i_d} \right\rangle, \boldsymbol{\Sigma}_{\overline{u}}^{(i_3,\cdots,i_d)}\right),
\end{equation}
where
\begin{align}
  \left\langle\overline{\mathcal{U}}_{i_1,:,\cdots,i_d} \right\rangle & = \frac{\left\langle\tau \right\rangle}{\varphi}\left( \overline{\mathcal{Y}}_{i_1,:,\cdots,i_d} - \overline{\mathcal{S}}_{i_1,:,\cdots,i_d} \right)\notag \\
  & \qquad \qquad \qquad \quad \left\langle \overline{\mathbf{V}}^{(i_3,\cdots,i_d)}\right\rangle \boldsymbol{\Sigma}_{\overline{u}}^{(i_3,\cdots,i_d)}, \label{update U}\\
  \boldsymbol{\Sigma}_{\overline{u}}^{(i_3,\cdots,i_d)}&= \left(\frac{\left\langle\tau\right\rangle}{\varphi}\left\langle\overline{\mathbf{V}}^{\dagger (i_3,\cdots,i_d)}\overline{\mathbf{V}}^{(i_3,\cdots,i_d)}  \right\rangle \right. \notag \\
  &\left. \qquad \qquad \qquad+ \operatorname{diag}(\left\langle\overline{\boldsymbol{\lambda}}^{(i_3,\cdots,i_d)}\right\rangle)\right)^{-1}. \label{update sigma u}
\end{align}

Similarly, the variational posterior of $q\left(\overline{\mathcal{V}} \right)$ can be obtained as 
\begin{equation}
  q(\overline{\mathcal{V}}) = \prod_{i_2\cdots i_d}\mathcal{N}\left(\overline{\mathcal{V}}_{i_2,:,\cdots,i_d} \mid \left\langle\overline{\mathcal{V}}_{i_2,:,\cdots,i_d} \right\rangle,\boldsymbol{\Sigma}_{\overline{v}}^{{(i_3,\cdots,i_d})}\right),
\end{equation}
where
\begin{align}
  \left\langle\overline{\mathcal{V}}_{i_2,:,\cdots,i_d} \right\rangle & = \frac{\left\langle\tau \right\rangle}{\varphi}\left( \overline{\mathcal{Y}}_{:,i_2,\cdots, i_d} - \overline{\mathcal{S}}_{:,i_2,\cdots,i_d} \right)^{\dagger} \notag \\
  & \qquad \qquad\qquad \quad  \left\langle \overline{\mathbf{U}}^{(i_3,\cdots,i_d)}\right\rangle \boldsymbol{\Sigma}_{\overline{v}}^{{(i_3,\cdots,i_d})}, \label{update V}\\  
 \boldsymbol{\Sigma}_{\overline{v}}^{{(i_3,\cdots,i_d})}&= \left(\frac{\left\langle\tau\right\rangle}{\varphi}\left\langle\overline{\mathbf{U}}^{\dagger (i_3,\cdots,i_d)}\overline{\mathbf{U}}^{(i_3,\cdots,i_d)}  \right\rangle \right. \notag \\
 &\left. \qquad \qquad \qquad+ \operatorname{diag}(\left\langle\overline{\boldsymbol{\lambda}}^{(i_3,\cdots,i_d)}\right\rangle)\right)^{-1}. \label{update sigma v}
\end{align}

The two expectation terms $\left\langle\overline{\mathbf{U}}^{\dagger (i_3,\cdots,i_d)}\overline{\mathbf{U}}^{(i_3,\cdots,i_d)}  \right\rangle $ and $\left\langle\overline{\mathbf{V}}^{\dagger (i_3,\cdots,i_d)}\overline{\mathbf{V}}^{(i_3,\cdots,i_d)}  \right\rangle $ can be computed by

\begin{align}
  & \left\langle\overline{\mathbf{U}}^{\dagger (i_3,\cdots,i_d)}\overline{\mathbf{U}}^{(i_3,\cdots,i_d)}  \right\rangle \notag \\  
  & \qquad  =I_1 \boldsymbol{\Sigma}_{\overline{u}}^{(i_3,\cdots,i_d)} + \left\langle\overline{\mathbf{U}}^{(i_3,\cdots,i_d)}  \right\rangle^{\dagger} \left\langle \overline{\mathbf{U}}^{(i_3,\cdots,i_d)}  \right\rangle,  \label{expUU}\\  
  & \left\langle\overline{\mathbf{V}}^{\dagger (i_3,\cdots,i_d)}\overline{\mathbf{V}}^{(i_3,\cdots,i_d)}  \right\rangle \notag \\  
  & \qquad =I_2 \boldsymbol{\Sigma}_{\overline{v}}^{{(i_3,\cdots,i_d})} + \left\langle\overline{\mathbf{V}}^{(i_3,\cdots,i_d)}  \right\rangle^{\dagger} \left\langle \overline{\mathbf{V}}^{(i_3,\cdots,i_d)}  \right\rangle. \label{expVV}
\end{align}

\textit{2) Variational Posterior of $\overline{\boldsymbol{\lambda}}$.} Again, using (\ref{VB optimal}) we have
\begin{align}
  &\ln q(\overline{\boldsymbol{\lambda}})   = \sum_{r i_3 \cdots i_d} \left[\left(\overline{a}_0^{\lambda}+\frac{I_1 + I_2}{2}-1 \right) \ln \bar{\lambda}_r^{(i_3,\cdots,i_d)} - \left(   \overline{\, b}_0^{\lambda} + \right.\right. \notag \\  
  &\left. \frac{\left(\left\langle \overline{\mathbf{U}}^{\dagger (i_3,\cdots,i_d)}\overline{\mathbf{U}}^{(i_3,\cdots,i_d)}\right\rangle + \left\langle \overline{\mathbf{V}}^{\dagger (i_3,\cdots,i_d)}\overline{\mathbf{V}}^{(i_3,\cdots,i_d)}\right\rangle\right)_{rr}}{2}\right) \notag \\
  &\left. \qquad\qquad\qquad\qquad\qquad\qquad\qquad \times \bar{\lambda}_r^{(i_3,\cdots,i_d)}\right] + \text{const}. \label{ln q(lambda)}
\end{align}

From (\ref{ln q(lambda)}), we can recognize that the variational posterior of $q(\overline{\boldsymbol{\lambda}})$ follows the product of Gamma distributions
\begin{equation}
  q(\overline{\boldsymbol{\lambda}})   = \prod_{r i_3 \cdots i_d} \operatorname{Ga}\left(\bar{\lambda}_r^{(i_3,\cdots,i_d)} \mid \overline{a}_{r}^{\lambda (i_3, \cdots, i_d)},\overline{b}_{r}^{\lambda (i_3, \cdots, i_d)}\right), \label{updata lambda}
\end{equation}
where
\begin{align}
  &\overline{a}_{r}^{\lambda (i_3, \cdots, i_d)}=\overline{a}_0^{\lambda} +\frac{I_1+I_2}{2} , \\
  &\overline{b}_{r}^{\lambda (i_3, \cdots, i_d)}=\overline{\, b}_0^{\lambda} +\notag \\
  &\frac{\left(\left\langle \overline{\mathbf{U}}^{\dagger (i_3,\cdots,i_d)}\overline{\mathbf{U}}^{(i_3,\cdots,i_d)}\right\rangle + \left\langle \overline{\mathbf{V}}^{\dagger (i_3,\cdots,i_d)}\overline{\mathbf{V}}^{(i_3,\cdots,i_d)}\right\rangle\right)_{rr}}{2}.
\end{align}

The involved expectation terms $\left\langle \overline{\mathbf{U}}^{\dagger (i_3,\cdots,i_d)}\overline{\mathbf{U}}^{(i_3,\cdots,i_d)}\right\rangle$ and $\left\langle \overline{\mathbf{V}}^{\dagger (i_3,\cdots,i_d)}\overline{\mathbf{V}}^{(i_3,\cdots,i_d)}\right\rangle$ have been given in (\ref{expUU}) and (\ref{expVV}), respectively, and the posterior mean of $\bar{\lambda}_r^{(i_3,\cdots,i_d)}$ is given by $\left\langle \bar{\lambda}_r^{(i_3,\cdots,i_d)} \right\rangle =\overline{a}_{r}^{\lambda (i_3, \cdots, i_d)}/ \overline{b}_{r}^{\lambda (i_3, \cdots, i_d)}$.

\textit{3) Variational Posterior of $\mathcal{S}$.} To compute the variational posterior of $\mathcal{S}$, we use the log of the joint distribution in the original domain (\ref{original log}) as $\mathcal{S}$ is specified in the original domain. With $\boldsymbol{\Theta}_{j} = \mathcal{S}$ and apply (\ref{VB optimal}), we have   
\begin{align}
  &\ln q(\mathcal{S}) =  \notag \\
  & \sum_{i_1 \cdots i_d} \left[-\frac{\left\langle \tau \right\rangle  + \left\langle \beta_{i_1,\cdots, i_d}\right\rangle }{2} \mathcal{S}^2_{i_1, \cdots, i_d}  + \left\langle \tau  \right\rangle \mathcal{Z}_{i_1,\cdots, i_d}\mathcal{S}_{i_1, \cdots, i_d} \right]\notag \\
  &+ \text{const},  \label{ln q(S)}
\end{align}
where $\mathcal{Z}_{i_1,\cdots, i_d} = \mathcal{Y}_{i_1, \cdots, i_d} - L^{-1}\left(\overline{\mathcal{U}} \triangle \overline{\mathcal{V}}^{\dagger} \right)_{i_1, \cdots, i_d}$.

From (\ref{ln q(S)}), we see that $q(\mathcal{S})$ follows the product of Gaussian distributions
\begin{equation}
  q(\mathcal{S}) = \prod_{i_1  \cdots i_d}\mathcal{N}\left(\mathcal{S}_{i_1,\cdots,i_d} \mid \left \langle \mathcal{S}_{i_1,\cdots,i_d}\right \rangle , \sigma_{i_1,\cdots, i_d}^{2}\right), \label{update S}
\end{equation}
where
\begin{align}
\left \langle \mathcal{S}_{i_1,\cdots,i_d}\right \rangle &=  \left \langle\tau\right \rangle \left(\left \langle\beta_{i_1,\cdots, i_d}\right \rangle + \left \langle \tau \right \rangle\right)^{-1} \mathcal{Z}_{i_1,\cdots, i_d},\\
\sigma_{i_1,\cdots, i_d}^{2}&=  \left(\left \langle \beta_{i_1,\cdots, i_d}\right \rangle + \left \langle \tau \right \rangle\right)^{-1}.
\end{align}

\textit{4) Variational Posterior of $\boldsymbol{\beta}$}. Similarly, by applying (\ref{VB optimal}) with $\boldsymbol{\Theta}_{j} = \boldsymbol{\beta}$ we have
\begin{align}
  &\ln  q(\boldsymbol{\beta}) = \sum_{i_1 \cdots i_d}\left[\left(a_0^{\beta}+\frac{1}{2} -1\right) \ln \beta_{i_1, \cdots, i_d} \right. \notag \\
  & \left.\qquad \qquad \quad-\left(b_0^{\beta}+\frac{\left \langle \mathcal{S}^{2}_{i_1, \cdots, i_d}\right \rangle}{2}\right)\beta_{i_1, \cdots, i_d}\right]+ \text{const}. \label{ln q(beta)}
\end{align}

From (\ref{ln q(beta)}), we see that the variational posterior of $\boldsymbol{\beta}$ follows the product of Gamma distribution 
\begin{equation}
  q(\boldsymbol{\beta}) = \prod_{i_1  \cdots i_d}\mathrm{Ga}\left(\beta_{i_1, \cdots, i_d} \mid a_{i_1,\cdots, i_d}^{\beta}, b_{i_1,\cdots ,i_d}^{\beta}\right), \label{update beta}
\end{equation}
where
\begin{align}
  a_{i_1,\cdots, i_d}^{\beta}&=a_{0}^{\beta}+\frac{1}{2},\\
  b_{i_1,\cdots, i_d}^{\beta}&=b_{0}^{\beta}+\frac{1}{2}\left \langle \mathcal{S}^{2}_{i_1,\cdots, i_d}\right \rangle.
\end{align}
The expectation $\left \langle \mathcal{S}^{2}_{i_1, \cdots, i_d}\right \rangle$ can be computed by
\begin{equation}
  \left \langle \mathcal{S}^{2}_{i_1, \cdots, i_d}\right \rangle = \left \langle \mathcal{S}_{i_1, \cdots, i_d}\right \rangle ^{2} + \sigma_{i_1,\cdots, i_d}^{2}.
\end{equation}

\textit{5) Variational Posterior of $\tau$}. For simplicity of computation, we use the  log of the joint distribution in the transform domain (\ref{transform log}) and $\ln q(\tau)$ is given by
\begin{align}
  &\ln q(\tau) = \left( a_{0}^{\tau} +\frac{I_1 \cdots I_d}{2} -1 \right) \ln \tau - \tau \left(b_{0}^{\tau} + \right. \notag \\
  &\left.  \frac{\left\langle  \sum_{i_1 \cdots i_d} (\overline{\mathcal{Y}}_{i_1,\cdots, i_d} - \overline{\mathcal{U}}_{i_1,:,\cdots,i_d} \cdot \overline{\mathcal{V}}^{\dagger}_{i_2,:,\cdots,i_d} -\overline{\mathcal{S}}_{i_1, \cdots, i_d})^{2}\right\rangle}{2\varphi}  \right). \label{ln q(tau)}
\end{align}

From (\ref{ln q(tau)}), we see that the variational distribution of $\tau$ follows the Gamma distribution
\begin{equation}
  q(\tau)=\mathrm{Ga}\left(\tau \mid a^{\tau}, b^{\tau}\right), \label{update tau}
\end{equation}
where
\begin{align}
  &a^{\tau}=a_{0}^{\tau}+\frac{I_1\cdots I_d}{2},\\
  &b^{\tau}=b_{0}^{\tau} +  \notag \\
  &\frac{\left\langle  \sum_{i_1 \cdots i_d} (\overline{\mathcal{Y}}_{i_1, \cdots, i_d} -  \overline{\mathcal{U}}_{i_1,:,\cdots,i_d} \cdot \overline{\mathcal{V}}^{\dagger}_{i_2,:,\cdots,i_d}-\overline{\mathcal{S}}_{i_1, \cdots, i_d})^{2} \right\rangle}{2\varphi}, 
\end{align}
and the involved expectation term can be computed by 
\begin{align}
  & \left\langle  \sum_{i_1  \cdots i_d} (\overline{\mathcal{Y}}_{i_1, \cdots, i_d} -  \overline{\mathcal{U}}_{i_1,:,\cdots,i_d} \cdot \overline{\mathcal{V}}^{\dagger}_{i_2,:,\cdots,i_d}-\overline{\mathcal{S}}_{i_1, \cdots, i_d})^{2} \right\rangle =  \notag \\  
  &\sum_{i_1 \cdots i_d} \left(\overline{\mathcal{Y}}_{i_1, \cdots, i_d} -  \left\langle \overline{\mathcal{U}}_{i_1,:,\cdots,i_d}\right\rangle  \left\langle \overline{\mathcal{V}}_{i_2,:,\cdots,i_d}\right\rangle^{\dagger} -\left\langle\overline{\mathcal{S}}_{i_1, \cdots, i_d}\right\rangle \right)^{2} \notag\\
  &\qquad + I_1I_2 \sum_{i_3 \cdots i_d} \operatorname{tr}\left( \boldsymbol{\Sigma}_{\overline{v}}^{{(i_3,\cdots,i_d})}\boldsymbol{\Sigma}_{\overline{u}}^{(i_3,\cdots,i_d)}\right) \notag \\
  &\qquad +I_1\sum_{i_3 \cdots i_d} \operatorname{tr} \left( \boldsymbol{\Sigma}_{\overline{u}}^{(i_3,\cdots,i_d)}\left\langle\overline{\mathbf{V}}^{(i_3,\cdots,i_d)}  \right\rangle^{\dagger} \left\langle \overline{\mathbf{V}}^{(i_3,\cdots,i_d)}  \right\rangle\right) \notag \\
  &\qquad + I_2 \sum_{i_3\cdots i_d} \operatorname{tr}\left( \boldsymbol{\Sigma}_{\overline{v}}^{{(i_3,\cdots,i_d})} \left\langle\overline{\mathbf{U}}^{(i_3,\cdots,i_d)}  \right\rangle^{\dagger} \left\langle \overline{\mathbf{U}}^{(i_3,\cdots,i_d)}  \right\rangle\right)\notag \\
  &\qquad + \varphi \sum_{i_1\cdots i_d} \sigma_{i_1\cdots i_d}^{2}.  \label{exp fit}
\end{align}

\textit{Reﬁnement Trick for Model Training}. As stated in literature, due to the over-strength of the regularization term (such as $\operatorname{diag}(\left\langle\overline{\boldsymbol{\lambda}}^{(i_3,\cdots,i_d)}\right\rangle)$  in (\ref{update sigma u}) and (\ref{update sigma v}) in this paper), the model may be premature. For LMH-BRTF, the model tends to prune most factors before ﬁtting the input data, leading to poor performance of the model. By contrast, a more reasonable model should first fit the observations and not prune until good fitness has been achieved. To this end, inspired by \cite{BTRTF}, we use a reﬁnement trick to gradually enhance the effect of the regularization term. Specifically, we modify the updating equations (\ref{update sigma u}) and (\ref{update sigma v}) as 
\begin{align}
  \boldsymbol{\Sigma}_{\overline{u}}^{(i_3,\cdots,i_d)}&= \left(\frac{\left\langle\tau\right\rangle}{\varphi}\left\langle\overline{\mathbf{V}}^{\dagger (i_3,\cdots,i_d)}\overline{\mathbf{V}}^{(i_3,\cdots,i_d)}  \right\rangle \right. \notag \\
  & \left. \qquad \qquad \quad+ \frac{Fit}{\gamma} \operatorname{diag}(\left\langle\overline{\boldsymbol{\lambda}}^{(i_3,\cdots,i_d)}\right\rangle)\right)^{-1}, \label{modi update sigma u} \\
  \boldsymbol{\Sigma}_{\overline{v}}^{{(i_3,\cdots,i_d})}&= \left(\frac{\left\langle\tau\right\rangle}{\varphi}\left\langle\overline{\mathbf{U}}^{\dagger (i_3,\cdots,i_d)}\overline{\mathbf{U}}^{(i_3,\cdots,i_d)}  \right\rangle \right.\notag \\
  &\left.  \qquad \qquad \quad + \frac{Fit}{\gamma}\operatorname{diag}(\left\langle\overline{\boldsymbol{\lambda}}^{(i_3,\cdots,i_d)}\right\rangle)\right)^{-1}, \label{modi update sigma v}
\end{align}
respectively, where $ Fit =1- \sqrt{ \langle \| \overline{\mathcal{Y}}-\overline{\mathcal{U}} \triangle \overline{\mathcal{V}}^{\dagger}- \overline{\mathcal{S} }  \|_{F}^{2} \rangle}  /\|\overline{\mathcal{Y}}\|_{F}$ characterizes the goodness of fit to the observed tensor of the model and $\langle \| \overline{\mathcal{Y}}-\overline{\mathcal{U}} \triangle \overline{\mathcal{V}}^{\dagger}- \overline{\mathcal{S} }  \|_{F}^{2} \rangle$ can be obtained by (\ref{exp fit}). The refinement factor $\gamma >0$ can adjust the strength of the regularization term $\operatorname{diag}(\left\langle\overline{\boldsymbol{\lambda}}^{(i_3,\cdots,i_d)}\right\rangle)$.

In the early stage of model training, the model does not fit the observed tensor $\mathcal{Y}$ well, and therefore gets a small $Fit$. At this stage, the regularization term does not have much effect and thus no columns in $\{\overline{\mathbf{U}}^{k}\}$ and $\{\overline{\mathbf{V}}^{k}\}$ are pruned, allowing the model to focus more on fitting the observations. As the model gradually fits the observed tensor, $Fit$ will converge to 1 thus leading to a stronger effect of the regularization term. In this case, the modified updating equations return to the original ones (\ref{update sigma u}) and (\ref{update sigma v}) when $\gamma = 1$. In general, the value of $\gamma$ can be tuned for different tasks. However, we find that fixing $\gamma = \varphi$ is sufficient to achieve good performance in most situations. Therefore, we set $\gamma = \varphi$ in all of our experiments unless otherwise specified.

\subsection{Initialization}

Well-chosen initialization parameters are very important for LMH-BRTF, as the variational inference methods generally converge to local optima. We initialize the top-level parameters $ \overline{a}_{0}^{\lambda},\overline{\, b}_{0}^{\lambda}, a_{0}^{\beta}, b_{0}^{\beta}, a_{0}^{\tau} , b_{0}^{\tau}$ to $10^{-6}$ so as to introduce non-informative priors. The factor tensors $ \langle \overline{\mathcal{U}}  \rangle$ and  $\langle \overline{\mathcal{V}}  \rangle$ are initialized according to Lemma 1, given by $ \langle \overline{\mathcal{U}}  \rangle = \overline{\mathcal{U}_{0}} \triangle \overline{\mathcal{S}_0}^{\frac{1}{2}}$ and $ \langle \overline{\mathcal{V}}  \rangle = \overline{\mathcal{V}_{0}} \triangle \overline{\mathcal{S}_0}^{\frac{1}{2}}$, respectively, where $\overline{\mathcal{U}_{0}},\overline{\mathcal{S}_{0}},\overline{\mathcal{V}_{0}}$ are from $\overline{\mathcal{Y}} = \overline{\mathcal{U}_{0}} \triangle \overline{\mathcal{S}_{0}} \triangle \overline{\mathcal{V}_{0}}^{\dagger}$. We set the covariance metrics $\boldsymbol{\Sigma}_{\overline{u}}^{(i_3,\cdots,i_d)}$ and $\boldsymbol{\Sigma}_{\overline{v}}^{{(i_3,\cdots,i_d})}$ to the scaled identity metrix $\varphi \mathbf{I}$, where $\varphi$ is determined by the invertible linear
transform $L$. We set $ \langle \beta_{i_1,\cdots, i_d}\rangle = 1/\sigma_0^{2}$ and the elements in the sparse component $\langle \mathcal{S}_{i_1,\cdots,i_d} \rangle$ are sampled from the uniform distribution $\mathcal{U}(0,\sigma_0)$, where $\sigma_0$ is a task-specific constant and acts as the initial variance of  $\langle \mathcal{S}_{i_1,\cdots,i_d} \rangle$. The precision $\bar{\lambda}_r^{(i_3,\cdots,i_d)}$ of the $r$-th columns in $\overline{\mathbf{U}}^{(i_3,\cdots,i_d)}$ and $\overline{\mathbf{V}}^{(i_3,\cdots,i_d)}$ is set to $1/{\varphi}$. The noise precision $\left\langle \tau\right\rangle$ is set to $a_0^{\tau}/b_0^{\tau}=1$. 

The variational inference algorithm for LMH-BRTF is summarized in Algorithm \ref{alg1}.

\section{Experiments}

In this section, we conduct numerical experiments on both synthetic and real-world datasets to demonstrate the effectiveness of LMH-BRTF. Specifically, we first demonstrate the ability of LMH-BRTF to estimate the low-multi-rank tensor from the high-order corrupted observation tensor and automatically determine the multi-rank of the high-order tensor. We then apply LMH-BRTF to color video denoising and light-field images denoising. All the experiments are implemented on Matlab R2021b with an AMD Ryzen 5 3600 6-Core CPU.

\subsection{Synthetic Experiments}

\begin{algorithm}[tbp]
  \caption{Variational Inference for LMH-BRTF} 
  \label{alg1}
  \begin{algorithmic}[1]
  \REQUIRE The observed corrupted tensor $\mathcal{Y} \in \mathbf{R}^{I_1 \times \cdots \times I_d}$, and the initialized multi-rank $ \operatorname{rank}_{m}(\hat{\mathcal{X}}^{0}) \in \mathbf{R}^{I_3 \times \cdots \times I_d}$.
  \ENSURE The low-multi-rank component $\mathcal{X} \in \mathbf{R}^{I_1 \times \cdots \times I_d}$.

  \STATE Initialize $\overline{\mathcal{U}}$, $\overline{\mathcal{V}}$, $\overline{\boldsymbol{\lambda}}$, $\mathcal{S}$, $\boldsymbol{\beta}$, $\tau$.
  \REPEAT
  \STATE Update the posterior $q(\overline{\mathcal{U}})$ by (\ref{update U}) and (\ref{modi update sigma u});
  \STATE Update the posterior $q(\overline{\mathcal{V}})$ by (\ref{update V}) and (\ref{modi update sigma v});
  \STATE Update the posterior $q(\overline{\boldsymbol{\lambda}})$ by (\ref{updata lambda});
  \STATE Update the posterior $q(\mathcal{S})$ by (\ref{update S});
  \STATE Update the posterior $q(\boldsymbol{\beta})$ by (\ref{update beta}); 
  \STATE Update the posterior $q(\tau)$ by (\ref{update tau});
  \STATE Remove the unnecessary columns in $\{\overline{\mathbf{U}}^{k}\}$ and $\{\overline{\mathbf{V}}^{k}\}$ to achieve automatic multi-rank determination;
  \UNTIL convergence.
  \RETURN The low-multi-rank component $\mathcal{X}$.
  \end{algorithmic}
\end{algorithm}

We first demonstrate that LMH-BRTF can effectively infer the low-multi-rank tensor from the high-order corrupted observation tensor, and can automatically determine the multi-rank of the high-order tensor on the synthetic datasets. We generate an order-$d$ corrupted tensor $\mathcal{Y} \in \mathbf{R}^{I \times I \times I_3 \times \cdots \times I_d}$ as follows: Firstly, we independently sample the elements in the two factor tensors $\mathcal{U} \in \mathbf{R}^{I \times R \times \cdots \times I_d }$ and $\mathcal{V} \in \mathbf{R}^{I \times R \times \cdots \times I_d }$ from the standard Gaussian distribution $\mathcal{N}(0,1)$. Then, we construct the groundtruth low-rank component $\mathcal{X}_{gt}$ by the invertible linear transform $L$ based t-product, i.e., $\mathcal{X}_{gt} \in \mathbf{R}^{I \times I \times I_3\times \cdots \times I_d} = \mathcal{U} *_{L} \mathcal{V}$, and further truncate it by the order-$d$ t-SVD so that $\operatorname{Rank}_{\mathrm{m}}\left(\mathcal{X}_{g t}\right)=\left(R_{g t}^{1}, \ldots, R_{g t}^{J}\right)$, where $J = I_3 \times \cdots \times I_d$. For brevity, we limit the invertible linear transform $L$ to the DFT transformation. In addition, to construct the sparse component $\mathcal{S} \in \mathbf{R}^{I \times I \times \cdots \times I_d}$, we randomly set $\rho \%$ of its $I^{2} \times I_3 \times \cdots \times I_d$ elements to be non-zero, whose values are drawn from the uniform distribution ranged $[-10,10]$. For each element in the noise component $\mathcal{E} \in \mathbf{R}^{I \times I \times \cdots \times I_d}$, we sample independently from the Gaussian distribution $\mathcal{N}(0, \sigma^2)$, where $\sigma^2 = 10^{-4}$ corresponds to the small noise scenario, and $\sigma^2 = 10^{-1}$ corresponds to the large noise scenario. Finally, the observed corrupted tensor is constructed by $\mathcal{Y} = \mathcal{X}_{gt}+\mathcal{S}+ \mathcal{E}$.

In this experiment, we compare LMH-BRTF with its baseline method, BTRTF \cite{BTRTF}, which is only applicable to the third-order tensors. For BTRTF, we use the suggested parameters in \cite{BTRTF}. For LMH-BRTF, we initialize the sparse variance $\sigma_{0}^{2}=1$ and the reﬁnement factor $\gamma=1$, so that these parameters do not have an impact on the model learning. We set the initial multi-rank to $ \operatorname{rank}_{m}(\hat{\mathcal{X}}^{0}) = 0.5I \times \operatorname{ones}(I_3 \times \cdots \times I_d,1)$. The convergence criterion is set to $tol=\frac{\left\|\hat{\mathcal{X}}^{t}-\hat{\mathcal{X}}^{t-1}\right\|_{F}}{\left\|\hat{\mathcal{X}}^{t-1}\right\|_{F}}<10^{-6}$, where $\hat{\mathcal{X}}^{t}$ is the estimated low-rank tensor in the $t$-th iteration.

To evaluate the performance of the model in automatically determining the tensor multi-rank and accurately estimating the low-rank component, we use $R_{err}=\sum_{k=1}^{I_{3} \cdots I_{d}} \frac{\left|\hat{R}^{(k)}-R_{g t}^{(k)}\right|}{I_{3} \cdots I_{d}}$ and $\mathcal{X}_{err} = \frac{\left\|\hat{\mathcal{X}}-\mathcal{X}_{g t}\right\|_{F}}{\left\|\mathcal{X}_{g t}\right\|_{F}}$, respectively. The experimental results are shown in Table \ref{tab:syn1} and Table \ref{tab:syn2} where we set $I=100$ and $R=10$. Table \ref{tab:syn1} summarizes the results corresponding to the order-3 tensors, and Table \ref{tab:syn2} summarizes the results corresponding to the order-4 and order-5 tensors.

\begin{table}[]
  \centering
  \caption{Multi-rank Determination and Low-multi-rank Tensor Estimation Results of BTRTF and LMH-BRTF on the Order-3 Synthetic Datasets}
  \label{tab:syn1}
  \begin{tabular}{cccccc}
  \hline\hline
  \multicolumn{6}{c}{Observed corrupted tensor $\mathcal{Y} \in \mathbf{R}^{100 \times 100 \times 100}$}                                                                     \\ \hline
  \multicolumn{6}{c}{$\operatorname{Rank}_{\mathrm{m}}\left(\mathcal{X}_{g t}\right)=\{R, \overbrace{0.5 R, \ldots, 0.5 R}^{20}, \overbrace{R, \ldots, R}^{59}, \overbrace{0.5 R, \ldots, 0.5 R}^{20}\}$} \\ \hline
  \multicolumn{2}{c|}{Method}                                                 & \multicolumn{2}{c|}{BTRTF}                              & \multicolumn{2}{c}{LMH-BRTF}       \\ \hline
  \multicolumn{1}{c|}{$\rho$} &
    \multicolumn{1}{c|}{$\sigma^{2}$} &
    \multicolumn{1}{c|}{$R_{err}$} &
    \multicolumn{1}{c|}{$\mathcal{X}_{err}$} &
    \multicolumn{1}{c|}{$R_{err}$} &
    $\mathcal{X}_{err}$ \\ \hline
  \multicolumn{1}{c|}{\multirow{2}{*}{5\%}}  & \multicolumn{1}{c|}{$10^{-4}$} & \multicolumn{1}{c|}{0} & \multicolumn{1}{c|}{1.957e-04} & \multicolumn{1}{c|}{0} & 1.957e-04 \\ \cline{2-6} 
  \multicolumn{1}{c|}{}                      & \multicolumn{1}{c|}{$10^{-1}$} & \multicolumn{1}{c|}{0} & \multicolumn{1}{c|}{6.760e-03} & \multicolumn{1}{c|}{0} & 6.760e-03 \\ \hline
  \multicolumn{1}{c|}{\multirow{2}{*}{10\%}} & \multicolumn{1}{c|}{$10^{-4}$} & \multicolumn{1}{c|}{0} & \multicolumn{1}{c|}{2.010e-04} & \multicolumn{1}{c|}{0} & 2.010e-04 \\ \cline{2-6} 
  \multicolumn{1}{c|}{}                      & \multicolumn{1}{c|}{$10^{-1}$} & \multicolumn{1}{c|}{0} & \multicolumn{1}{c|}{6.970e-03} & \multicolumn{1}{c|}{0} & 6.970e-03 \\ \hline
  \multicolumn{1}{c|}{\multirow{2}{*}{20\%}} & \multicolumn{1}{c|}{$10^{-4}$} & \multicolumn{1}{c|}{0} & \multicolumn{1}{c|}{2.154e-04} & \multicolumn{1}{c|}{0} & 2.154e-04 \\ \cline{2-6} 
  \multicolumn{1}{c|}{}                      & \multicolumn{1}{c|}{$10^{-1}$} & \multicolumn{1}{c|}{0} & \multicolumn{1}{c|}{7.515e-03} & \multicolumn{1}{c|}{0} & 7.515e-03 \\ \hline\hline
  \end{tabular}%
\end{table}

As shown in Table \ref{tab:syn1}, we note that LMH-BRTF degenerates to BTRTF when $d=3$. This is because the order-$d$ t-SVD is perfectly compatible with the third-order case. In this case, both LMH-BRTF and BTRTF can effectively determine the tensor multi-rank and estimate the low-multi-rank component. However, in the higher-order cases, as shown in table \ref{tab:syn2} for $d=4,5$, BTRTF can no longer be effective. This is because BTRTF is only applicable to order-3 tensors and in order to deal with the higher-order case, BTRTF has to reshape the higher-order tensors into third-order ones. In this experiment, we merge the 3 to $d=4,5$ modes for BTRTF. However, this operation inevitably breaks the correlations among different dimensions of the original tensor, and consequently the corresponding performance in multi-rank determination and low-multi-rank tensor estimation is poor. In contrast, LMH-BRTF can still effectively capture the low-rank structure in the higher-order tensors, thus achieving good performance in multi-rank determination and low-multi-rank tensor estimation.

These results demonstrate that LMH-BRTF can effectively infer the low-multi-rank component from the high-order corrupted observation tensor and determine the tensor multi-rank.

\begin{table}[]
  \caption{Multi-rank Determination and Low-multi-rank Tensor Estimation Results of BTRTF and LMH-BRTF on the Order-4 and Order-5 Synthetic Datasets}
  \label{tab:syn2}
  \resizebox{\columnwidth}{!}{%
  \begin{tabular}{cccccc}
  \hline\hline
  \multicolumn{6}{c}{Observed corrupted tensor $\mathcal{Y} \in \mathbf{R}^{100 \times 100 \times 10 \times 10}$} \\ \hline
  \multicolumn{6}{c}{$\operatorname{Rank}_{\mathrm{m}}\left(\mathcal{X}_{g t}\right)=\{R,\overbrace{0.5 R, \ldots, 0.5 R}^{9}, \overbrace{R, \ldots, R}^{20} ,  \overbrace{0.5 R, \ldots, 0.5 R}^{50}, \overbrace{R, \ldots, R}^{20}\}$} \\ \hline
  \multicolumn{2}{c|}{Method} &
    \multicolumn{2}{c|}{BTRTF} &
    \multicolumn{2}{c}{LMH-BRTF} \\ \hline
  \multicolumn{1}{c|}{$\rho$} &
    \multicolumn{1}{c|}{$\sigma^{2}$} &
    \multicolumn{1}{c|}{$R_{err}$} &
    \multicolumn{1}{c|}{$\mathcal{X}_{err}$} &
    \multicolumn{1}{c|}{$R_{err}$} &
    $\mathcal{X}_{err}$ \\ \hline
  \multicolumn{1}{c|}{\multirow{2}{*}{5\%}} &
    \multicolumn{1}{c|}{$10^{-4}$} &
    \multicolumn{1}{c|}{21.14} &
    \multicolumn{1}{c|}{1.550e-1} &
    \multicolumn{1}{c|}{0} &
    1.905e-4 \\ \cline{2-6} 
  \multicolumn{1}{c|}{} &
    \multicolumn{1}{c|}{$10^{-1}$} &
    \multicolumn{1}{c|}{42.95} &
    \multicolumn{1}{c|}{1.600e-1} &
    \multicolumn{1}{c|}{0} &
    6.605e-3 \\ \hline
  \multicolumn{1}{c|}{\multirow{2}{*}{10\%}} &
    \multicolumn{1}{c|}{$10^{-4}$} &
    \multicolumn{1}{c|}{19.48} &
    \multicolumn{1}{c|}{1.682e-1} &
    \multicolumn{1}{c|}{0} &
    1.955e-4 \\ \cline{2-6} 
  \multicolumn{1}{c|}{} &
    \multicolumn{1}{c|}{$10^{-1}$} &
    \multicolumn{1}{c|}{42.95} &
    \multicolumn{1}{c|}{1.712e-1} &
    \multicolumn{1}{c|}{0} &
    6.798e-3 \\ \hline
  \multicolumn{1}{c|}{\multirow{2}{*}{20\%}} &
    \multicolumn{1}{c|}{$10^{-4}$} &
    \multicolumn{1}{c|}{42.95} &
    \multicolumn{1}{c|}{1.899e-1} &
    \multicolumn{1}{c|}{0} &
    2.088e-4 \\ \cline{2-6} 
  \multicolumn{1}{c|}{} &
    \multicolumn{1}{c|}{$10^{-1}$} &
    \multicolumn{1}{c|}{42.95} &
    \multicolumn{1}{c|}{1.914e-1} &
    \multicolumn{1}{c|}{0} &
    7.321e-3 \\ \hline
  \multicolumn{6}{l}{$\operatorname{Rank}_{\mathrm{m}}\left(\mathcal{X}_{g t}\right)=\{0.5R,\overbrace{R, \ldots, R}^{9}, \overbrace{0.5R, \ldots, 0.5R}^{20} ,  \overbrace{R, \ldots,  R}^{50}, \overbrace{0.5R, \ldots, 0.5R}^{20}\}$} \\ \hline
  \multicolumn{1}{c|}{\multirow{2}{*}{5\%}} &
    \multicolumn{1}{c|}{$10^{-4}$} &
    \multicolumn{1}{c|}{42.05} &
    \multicolumn{1}{c|}{1.790e-1} &
    \multicolumn{1}{c|}{0} &
    1.959e-4 \\ \cline{2-6} 
  \multicolumn{1}{c|}{} &
    \multicolumn{1}{c|}{$10^{-1}$} &
    \multicolumn{1}{c|}{42.05} &
    \multicolumn{1}{c|}{1.830e-1} &
    \multicolumn{1}{c|}{0} &
    6.749e-3 \\ \hline
  \multicolumn{1}{c|}{\multirow{2}{*}{10\%}} &
    \multicolumn{1}{c|}{$10^{-4}$} &
    \multicolumn{1}{c|}{42.05} &
    \multicolumn{1}{c|}{1.908e-1} &
    \multicolumn{1}{c|}{0} &
    2.006e-4 \\ \cline{2-6} 
  \multicolumn{1}{c|}{} &
    \multicolumn{1}{c|}{$10^{-1}$} &
    \multicolumn{1}{c|}{42.05} &
    \multicolumn{1}{c|}{1.944e-1} &
    \multicolumn{1}{c|}{0} &
    6.977e-3 \\ \hline
  \multicolumn{1}{c|}{\multirow{2}{*}{20\%}} &
    \multicolumn{1}{c|}{$10^{-4}$} &
    \multicolumn{1}{c|}{42.05} &
    \multicolumn{1}{c|}{2.135e-1} &
    \multicolumn{1}{c|}{0} &
    2.146e-4 \\ \cline{2-6} 
  \multicolumn{1}{c|}{} &
    \multicolumn{1}{c|}{$10^{-1}$} &
    \multicolumn{1}{c|}{42.05} &
    \multicolumn{1}{c|}{2.149e-1} &
    \multicolumn{1}{c|}{0} &
    7.492e-3 \\ \hline\hline
  \multicolumn{6}{c}{Observed corrupted tensor $\mathcal{Y} \in \mathbf{R}^{100 \times 100 \times 5 \times 5 \times 5}$} \\ \hline
  \multicolumn{6}{c}{$\operatorname{Rank}_{\mathrm{m}}\left(\mathcal{X}_{g t}\right)=\{R,\overbrace{0.5 R, \ldots, 0.5 R}^{24}, \overbrace{R, \ldots, R}^{25} ,  \overbrace{0.5 R, \ldots, 0.5 R}^{50}, \overbrace{R, \ldots, R}^{25}\}$} \\ \hline
  \multicolumn{2}{c|}{Method} &
    \multicolumn{2}{c|}{BTRTF} &
    \multicolumn{2}{c}{LMH-BRTF} \\ \hline
  \multicolumn{1}{c|}{$\rho$} &
    \multicolumn{1}{c|}{$\sigma^{2}$} &
    \multicolumn{1}{c|}{$R_{err}$} &
    \multicolumn{1}{c|}{$\mathcal{X}_{err}$} &
    \multicolumn{1}{c|}{$R_{err}$} &
    $\mathcal{X}_{err}$ \\ \hline
  \multicolumn{1}{c|}{\multirow{2}{*}{5\%}} &
    \multicolumn{1}{c|}{$10^{-4}$} &
    \multicolumn{1}{c|}{42.96} &
    \multicolumn{1}{c|}{3.023e-1} &
    \multicolumn{1}{c|}{0} &
    1.687e-4 \\ \cline{2-6} 
  \multicolumn{1}{c|}{} &
    \multicolumn{1}{c|}{$10^{-1}$} &
    \multicolumn{1}{c|}{42.96} &
    \multicolumn{1}{c|}{3.028e-1} &
    \multicolumn{1}{c|}{0} &
    5.900e-3 \\ \hline
  \multicolumn{1}{c|}{\multirow{2}{*}{10\%}} &
    \multicolumn{1}{c|}{$10^{-4}$} &
    \multicolumn{1}{c|}{42.96} &
    \multicolumn{1}{c|}{3.070e-1} &
    \multicolumn{1}{c|}{0} &
    1.726e-4 \\ \cline{2-6} 
  \multicolumn{1}{c|}{} &
    \multicolumn{1}{c|}{$10^{-1}$} &
    \multicolumn{1}{c|}{42.96} &
    \multicolumn{1}{c|}{3.070e-1} &
    \multicolumn{1}{c|}{0} &
    6.098e-3 \\ \hline
  \multicolumn{1}{c|}{\multirow{2}{*}{20\%}} &
    \multicolumn{1}{c|}{$10^{-4}$} &
    \multicolumn{1}{c|}{42.96} &
    \multicolumn{1}{c|}{3.164e-1} &
    \multicolumn{1}{c|}{0} &
    1.829e-4 \\ \cline{2-6} 
  \multicolumn{1}{c|}{} &
    \multicolumn{1}{c|}{$10^{-1}$} &
    \multicolumn{1}{c|}{42.96} &
    \multicolumn{1}{c|}{3.179e-1} &
    \multicolumn{1}{c|}{0} &
    6.545e-3 \\ \hline
  \multicolumn{6}{l}{$\operatorname{Rank}_{\mathrm{m}}\left(\mathcal{X}_{g t}\right)=\{0.5R,\overbrace{ R, \ldots,  R}^{24}, \overbrace{0.5R, \ldots, 0.5R}^{25} ,  \overbrace{ R, \ldots, R}^{50}, \overbrace{0.5R, \ldots, 0.5R}^{25}\}$} \\ \hline
  \multicolumn{1}{c|}{\multirow{2}{*}{5\%}} &
    \multicolumn{1}{c|}{$10^{-4}$} &
    \multicolumn{1}{c|}{42.04} &
    \multicolumn{1}{c|}{3.305e-1} &
    \multicolumn{1}{c|}{0} &
    1.729e-4 \\ \cline{2-6} 
  \multicolumn{1}{c|}{} &
    \multicolumn{1}{c|}{$10^{-1}$} &
    \multicolumn{1}{c|}{42.04} &
    \multicolumn{1}{c|}{3.307e-1} &
    \multicolumn{1}{c|}{0} &
    6.019e-3 \\ \hline
  \multicolumn{1}{c|}{\multirow{2}{*}{10\%}} &
    \multicolumn{1}{c|}{$10^{-4}$} &
    \multicolumn{1}{c|}{42.04} &
    \multicolumn{1}{c|}{3.347e-1} &
    \multicolumn{1}{c|}{0} &
    1.776e-4 \\ \cline{2-6} 
  \multicolumn{1}{c|}{} &
    \multicolumn{1}{c|}{$10^{-1}$} &
    \multicolumn{1}{c|}{42.04} &
    \multicolumn{1}{c|}{3.347e-1} &
    \multicolumn{1}{c|}{0} &
    6.235e-3 \\ \hline
  \multicolumn{1}{c|}{\multirow{2}{*}{20\%}} &
    \multicolumn{1}{c|}{$10^{-4}$} &
    \multicolumn{1}{c|}{42.04} &
    \multicolumn{1}{c|}{3.415e-1} &
    \multicolumn{1}{c|}{0} &
    1.883e-4 \\ \cline{2-6} 
  \multicolumn{1}{c|}{} &
    \multicolumn{1}{c|}{$10^{-1}$} &
    \multicolumn{1}{c|}{42.04} &
    \multicolumn{1}{c|}{3.422e-1} &
    \multicolumn{1}{c|}{0} &
    6.699e-3 \\ \hline\hline
  \end{tabular}%
  }
\end{table}

\begin{table}[]
  \centering
  \caption{Summary of Different types of Competing TRPCA Methods.}
  \label{tab:competing methods}
  \begin{tabular}{clcc}
  \hline\hline
  Type           & \multicolumn{2}{c}{Method}                          & Order-$d$ tensor \\ \hline
  CP              & BRTF\cite{BRTF}   & \multirow{2}{*}{KBR\cite{KBR}}   & $\surd $         \\
  Tucker          & SNN\cite{SNN}     &                                  & $\surd $         \\
  Order-3   t-SVD & \multicolumn{2}{c}{TNN\cite{TNN}, BTRTF\cite{BTRTF}} & $\times$         \\ \hline\hline
  \end{tabular}
\end{table}

\subsection{Application to Color Video Denoising}

In this subsection, we apply LMH-BRTF to color video denoising where the videos are corrupted by the noises. In this task, the noises are absorbed by the sparse component and the noise component and the clean videos can be approximated by the low-rank component.

In order to evaluate LMH-BRTF, we download 10 color videos from the derf website\footnote{https://media.xiph.org/video/derf/}, namely Coastguard, Flower, Husky, Mobile, Bus, City, Football, Foreman, Harbour, and Soccer, respectively. Further, due to computational limitations, we take only the first 30 frames of these videos, where each frame has a size of $288 \times 352 \times 3 $. Therefore, each video can be formulated as a fourth-order tensor sized $288 \times 352 \times 3 \times 30$. For each video tensor, we randomly corrupt $\rho=20\%$ of its elements to random values ranging from $[0, 255]$. Following the common settings, the elements of each video are then normalized to $[0,1]$. 
Furthermore, we corrupt all the elements of the video tensor by adding i.i.d. noise elements drawn from the Gaussian distribution $\mathcal{N}(0, \sigma^2)$, where we set $\sigma^{2} = 10^{-4}$.

\begin{table*}[tb]
  \centering
  \caption{Video Denoising Results in Terms of PSNR, SSIM, FSIM, and CPU Time with $\rho = 20\%$ and $\sigma^{2} = 10^{-4}$}
  \label{tab:tableExp1}
  \begin{tabular}{cccccccccc}
  \hline\hline
  Index               & Dataset                 & Metric  & Observed & BRTF            & SNN     & KBR      & TNN     & BTRTF   & LMH-BRTF        \\ \hline
  \multirow{4}{*}{1} & \multirow{4}{*}{Coastguard} & PSNR & 14.509 & 18.980  & 25.918 & 27.231 & 30.599 & 31.748 & \textbf{34.895} \\
                      &                          & SSIM    & 0.167    & 0.321           & 0.797   & 0.824    & 0.927   & 0.937   & \textbf{0.968}  \\
                      &                          & FSIM    & 0.623    & 0.529           & 0.887   & 0.902    & 0.952   & 0.959   & \textbf{0.979}  \\
                      &                          & Time(s) & ------   & \textbf{99.632} & 193.742 & 994.245  & 201.056 & 178.174 & 216.879         \\ \hline
  \multirow{4}{*}{2}  & \multirow{4}{*}{Flower}  & PSNR    & 18.624   & 13.657          & 20.522  & 21.550    & 23.621  & 24.582  & \textbf{27.479} \\
                      &                          & SSIM    & 0.248    & 0.497           & 0.580    & 0.758    & 0.848   & 0.893   & \textbf{0.930}   \\
                      &                          & FSIM    & 0.667    & 0.525           & 0.771   & 0.835    & 0.889   & 0.910   & \textbf{0.942}  \\
                      &                          & Time(s) & ------   & \textbf{56.601} & 189.158   & 1140.542 & 204.449 & 259.379 & 256.463         \\ \hline
  \multirow{4}{*}{3}  & \multirow{4}{*}{Husky}   & PSNR    & 13.442   & 13.521          & 19.570   & 17.642   & 22.138  & 21.921  & \textbf{25.161} \\
                      &                          & SSIM    & 0.413    & 0.115           & 0.714   & 0.595    & 0.862   & 0.860    & \textbf{0.924}  \\
                      &                          & FSIM    & 0.792    & 0.339           & 0.882   & 0.817    & 0.920   & 0.921   & \textbf{0.960}  \\
                      &                          & Time(s) & ------   & \textbf{50.062} & 163.546 & 1182.212 & 199.534 & 253.921 & 306.217         \\ \hline
  \multirow{4}{*}{4}  & \multirow{4}{*}{Mobile}  & PSNR    & 13.442   & 11.554          & 20.570   & 20.124   & 23.479  & 24.180   & \textbf{25.938} \\
                      &                          & SSIM    & 0.342    & 0.185           & 0.696   & 0.694    & 0.867   & 0.900     & \textbf{0.926}  \\
                      &                          & FSIM    & 0.697    & 0.465           & 0.852   & 0.844    & 0.922   & 0.936   & \textbf{0.953}  \\
                      &                          & Time(s) & ------   & \textbf{54.723} & 185.423 & 1255.522 & 188.007 & 253.427 & 313.674         \\ \hline
  \multirow{4}{*}{5}  & \multirow{4}{*}{Bus}     & PSNR    & 12.851   & 14.210           & 20.323  & 22.072   & 25.233   & 25.606  & \textbf{29.523} \\
                      &                          & SSIM    & 0.245    & 0.244           & 0.577   & 0.708    & 0.848   & 0.860    & \textbf{0.934}  \\
                      &                          & FSIM    & 0.663    & 0.473           & 0.807   & 0.833    & 0.908   & 0.911   & \textbf{0.959}  \\
                      &                          & Time(s) & ------   & \textbf{54.269} & 163.563  & 1094.468 & 184.328 & 249.399 & 294.003         \\ \hline
  \multirow{4}{*}{6}  & \multirow{4}{*}{City}    & PSNR    & 14.128   & 18.874          & 24.551  & 25.998   & 27.896  & 27.708  & \textbf{30.403} \\
                      &                          & SSIM    & 0.165    & 0.304           & 0.671   & 0.751    & 0.843   & 0.840    & \textbf{0.914}  \\
                      &                          & FSIM    & 0.601    & 0.449           & 0.850    & 0.869    & 0.908   & 0.901   & \textbf{0.946}  \\
                      &                          & Time(s) & ------   & \textbf{58.623} & 163.663 & 981.306  & 175.072 & 190.381 & 249.252         \\ \hline
  \multirow{4}{*}{7} & \multirow{4}{*}{Football}   & PSNR & 13.759 & 15.406 & 20.771 & 24.967 & 24.125 & 24.066 & \textbf{27.378} \\
                      &                          & SSIM    & 0.183    & 0.319           & 0.569   & 0.797    & 0.806   & 0.832   & \textbf{0.914}  \\
                      &                          & FSIM    & 0.604    & 0.522           & 0.792   & 0.897    & 0.881   & 0.892   & \textbf{0.941}  \\
                      &                          & Time(s) & ------   & \textbf{73.559} & 183.936 & 1239.631 & 182.168 & 245.261 & 313.805         \\ \hline
  \multirow{4}{*}{8}  & \multirow{4}{*}{Foreman} & PSNR    & 16.597   & 16.037          & 26.621  & 30.117   & 28.928  & 29.869  & \textbf{31.611} \\
                      &                          & SSIM    & 0.115    & 0.537           & 0.744   & 0.856    & 0.872   & 0.896   & \textbf{0.915}  \\
                      &                          & FSIM    & 0.507    & 0.663           & 0.885   & 0.925    & 0.927   & 0.932   & \textbf{0.946}  \\
                      &                          & Time(s) & ------   & \textbf{60.374} & 158.165 & 798.142  & 182.050  & 159.873 & 191.498         \\ \hline
  \multirow{4}{*}{9}  & \multirow{4}{*}{Harbour} & PSNR    & 15.977   & 15.054          & 24.624  & 26.251   & 27.463  & 28.633  & \textbf{32.268} \\
                      &                          & SSIM    & 0.299    & 0.193           & 0.845   & 0.882    & 0.924   & 0.935   & \textbf{0.969}  \\
                      &                          & FSIM    & 0.696    & 0.479           & 0.915   & 0.926    & 0.944   & 0.951   & \textbf{0.976}  \\
                      &                          & Time(s) & ------   & \textbf{55.292} & 177.307 & 1152.984 & 174.713 & 172.373 & 238.989         \\ \hline
  \multirow{4}{*}{10} & \multirow{4}{*}{Soccer}  & PSNR    & 14.491   & 16.907          & 21.031  & 27.346   & 27.242  & 27.968  & \textbf{30.040}  \\
                      &                          & SSIM    & 0.130     & 0.436           & 0.614   & 0.809    & 0.844   & 0.863   & \textbf{0.909}  \\
                      &                          & FSIM    & 0.562    & 0.591           & 0.794   & 0.886    & 0.896   & 0.906   & \textbf{0.938}  \\
                      &                          & Time(s) & ------   & \textbf{69.538} & 174.276 & 945.505  & 179.587 & 188.728 & 225.795         \\ \hline\hline
  \end{tabular}
\end{table*}

\begin{figure}[htbp]
	\centering
	\includegraphics[scale=1,width=3.4in]{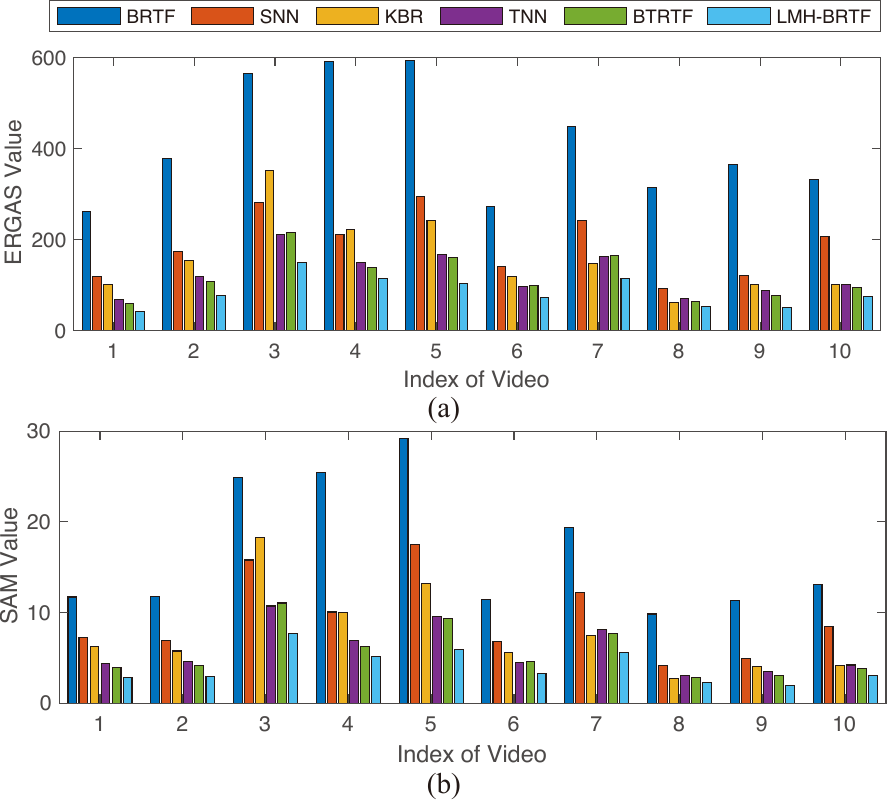}
	\caption{Comparison of color video denoising results on 10 videos. (a) Comparison with respect to ERGAS. (b) Comparison with respect to SAM.}
	\label{exp1_values}
\end{figure}

\begin{figure*}[htbp]
	\centering
	\includegraphics[scale=1,width=7in]{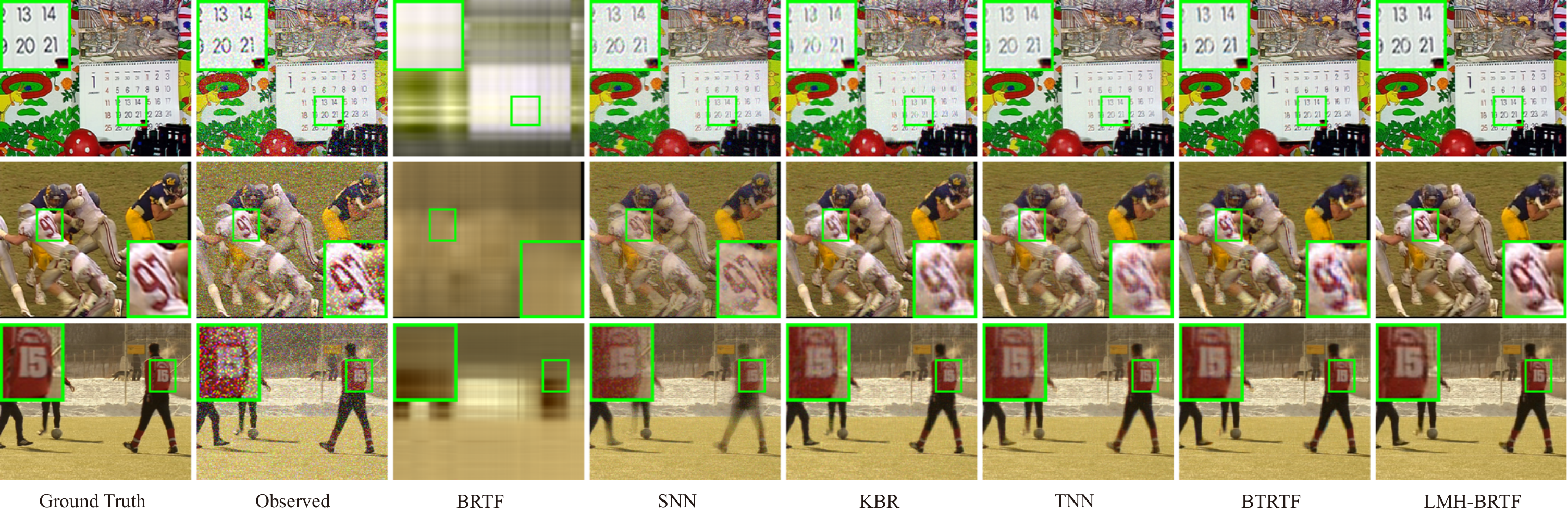}
	\caption{Color video denoising results on three representative videos with $\rho=20\%$ and $\sigma^2 = 10^{-4}$. The names of the videos from top to bottom are Mobile, Football, and Soccer, respectively.}
	\label{exp1_show}
\end{figure*}

We use the peak signal-to-noise ratio (PSNR\footnote{$\mathrm{PSNR}=10 \log _{10}\left(I_{1} \times I_{2} \times \cdots  \times I_{d}\|\mathcal{X}_{gt}\|_{\infty}^{2} /\|\hat{\mathcal{X}}-\mathcal{X}_{gt}\|_{F}^{2}\right)$}), the structural similarity (SSIM \cite{SSIM}), the feature similarity (FSIM \cite{FSIM}), the erreur relative globale adimensionnelle de synthese (ERGAS\cite{ERGAS}), and spectral angle mapper (SAM\cite{SAM}) to evaluate the denoising performance. PSNR and SSIM are two classical metrics in the field of image processing and computer vision, which are based on MSE and structural consistency, respectively, so as to measure the relationship between the target image and the reference image. In comparison, FSIM pays more attention to perceptual consistency. Larger values of these three metrics indicate more similar the target image is to the reference image and thus are better. For ERGAS, it measures ﬁdelity of the restored video based on the weighted sum of MSE in each frame and SAM determines the similarity between two videos by treating them as vectors in a space with dimensionality equal to the number of video frames and calculates the angle between them. For ERGAS and SAM, smaller values are better. We also report the CPU time to quantify the efficiency of each method.  

We compare the proposed method with several popular TRPCA methods, including the CP-based TRPCA method \cite{BRTF}, Tucker-based TRPCA method \cite{SNN}, TRPCA method combining both the CP and Tucker decomposition \cite{KBR}, and the order-3 t-SVD based methods\cite{TNN,BTRTF}. These competing methods are summarized in Table \ref{tab:competing methods}. For the CP-based and Tucker-based TRPCA methods, they can be directly applied to the order-$d$ tensors, whereas for the order-3 t-SVD based TRPCA methods, these algorithms can only reshape the higher-order tensors to third-order tensors. In this experiment, we merge the third mode with the fourth mode of the color videos so that the input tensor sized $288 \times 352 \times 90$. For LMH-BRTF, we specify the invertible linear transforms $L$ as the DFT transformation and the corresponding constant $\varphi = I_3 \times \cdots I_d$.

All the parameters in the competing methods are optimally tuned or set as the suggested values in the corresponding papers. Specifically, for BRTF, we set the initial rank to $\max(I_1,I_2,I_3,I_4)$; For SNN, we set the parameter $\boldsymbol{\lambda}=[\lambda_1,\lambda_2,\lambda_3,\lambda_4]$ as $[15,15,1.5,300]$; For KBR, we set $\beta = 2000$, $\gamma = 100\beta$, $\lambda=10$, $\mu = 250$, $\rho = 1.05$; For TNN, we set the balancing parameter $\lambda=1 / \sqrt{I_{3}\times I_4 \times \max \left(I_{1}, I_{2}\right)}$; For BTRTF, we set the initial multi-rank to $150 \times \operatorname{ones}(I_3 \times I_4,1)$, and the sparse variance $\sigma_{0}^{2} = 10^{-7}$; For LMH-BRTF, we initialize the multi-rank to $150 \times \operatorname{ones}(I_3 \times I_4,1)$ and set the convergence criterion to $tol < 10^{-4}$. We set the sparse variance $\sigma_{0}^{2}$ to $10^{-7}$ so that the values of the elements in the initial sparse component $\mathcal{S}_0$ are close to 0, thereby making the model tend to use the low-rank component rather than the sparse component to fit the observed data.

The color video denoising results on 10 video datasets are shown in Table \ref{tab:tableExp1}. It can be seen that LMH-BRTF achieves the best denoising performance among all the competitors in terms of PSNR, SSIM, and FSIM. Specifically, LMH-BRTF exceeds the second best method by around 2 to 4 dB with respect to PSNR. This can be attributed to the fact that LMH-BRTF is based on order-$d$ t-SVD, which is designed for higher-order tensors so that direct processing can be applied to them. In addition, the order-$d$ t-SVD can effectively capture the correlations among dimensions so that LMH-BRTF can have good denoising performance. In contrast, the denoising performance of the order-3 t-SVD based methods, TNN and BTRTF, is relatively poor. This is because these methods can only process the order-3 tensors. For higher-order tensors, one has to reshape them into order-3 tensors and then feed them into TNN or BTRTF. However, the reshape operation inevitably breaks the correlations among the dimensions of the original high-order tensor so that the denoising results of TNN and BRTRF are relatively poor. Another way for BTRTF and TNN to process the corrupted color videos is to process each video frame by frame. However, we get even poorer denoising results. This is because this method entirely destroys the correlations in the time dimension. Also, LMH-BRTF achieves better performance than the methods based on CP decomposition and Tucker decomposition, which is due to the fact that CP decomposition and Tucker decomposition fail to capture the complex low-rank structures in video data. We also compare the running time of these TRPCA methods. It turns out that BRTF is a more efficient method, but it fails to perform well as it can not accurately capture the correlation among the dimensions of the video tensors and tends to obtain an inaccurate low-rank component with the underestimated rank. As for LMH-BRTF, its running time is comparable to that of the TNN and BTRTF. The ERGAS and SAM values of these ten videos evaluated from LMH-BRTF and the five competing methods are shown in Fig. \ref{exp1_values}. These results also demonstrate the superiority of LMH-BRTF in color video denoising. Fig. \ref{exp1_show} further shows the representative visualization results in three videos, namely Mobile, Football, and Soccer, respectively. We see that LMH-BRTF can obtain videos with more details from corrupted videos and with colors that are more consistent with the ground-truth videos. For example, LMH-BRTF obtains digits more accurately and with more details from all the three corrupted videos. Additionally, the color of the digits obtained by LMH-BRTF is closer to that in the ground-truth video of Football.

\subsection{Application to Light-Field Images Denoising}

\begin{table*}[tb]
  \centering
  \caption{Light-Field Images Denoising Results in Terms of PSNR, SSIM, FSIM, and CPU Time with $\rho = 20\%$ and $\sigma^{2} = 10^{-4}$}
  \label{tab:tableExp3}
  \begin{tabular}{cccccccccc}
  \hline\hline
  Index & Dataset & Metric & Observed & \multicolumn{1}{c}{BRTF} & \multicolumn{1}{c}{SNN} & \multicolumn{1}{c}{KBR} & TNN & BTRTF & LMH-BRTF \\ \hline
  \multirow{4}{*}{1} & \multirow{4}{*}{Amethyst} & PSNR    & 12.445 & 14.055           & 20.431  & 18.532   & 27.500  & 28.361   & \textbf{29.585} \\
                     &                           & SSIM    & 0.081  & 0.613            & 0.436   & 0.370     & 0.881   & 0.921    & \textbf{0.936}  \\
                     &                           & FSIM    & 0.408  & 0.696            & 0.681   & 0.698    & 0.914   & 0.932    & \textbf{0.947}  \\
                     &                           & Time(s) & ------ & \textbf{150.687} & 631.403 & 1287.070  & 352.479 & 475.438  & 611.887         \\ \hline
  \multirow{4}{*}{2} & \multirow{4}{*}{Bunny}    & PSNR    & 12.888 & 11.841           & 20.253  & 18.431   & 27.378  & 28.738   & \textbf{30.157} \\
                     &                           & SSIM    & 0.102  & 0.571            & 0.383   & 0.420     & 0.889   & 0.929    & \textbf{0.943}  \\
                     &                           & FSIM    & 0.417  & 0.708            & 0.650    & 0.698    & 0.921   & 0.944    & \textbf{0.958}  \\
                     &                           & Time(s) & ------ & \textbf{150.885} & 623.725   & 1233.092 & 322.284 & 446.275  & 552.526         \\ \hline
  \multirow{4}{*}{3} & \multirow{4}{*}{Beans}    & PSNR    & 14.104 & 12.053           & 19.866  & 18.181   & 27.159  & 27.924   & \textbf{30.594} \\
                     &                           & SSIM    & 0.127  & 0.484            & 0.370    & 0.541    & 0.883   & 0.917    & \textbf{0.940}  \\
                     &                           & FSIM    & 0.444  & 0.721            & 0.660    & 0.705    & 0.920   & 0.933    & \textbf{0.953}  \\
                     &                           & Time(s) & ------ & \textbf{101.158} & 484.205 & 1090.165 & 376.120 & 1004.599 & 862.645         \\ \hline
  \multirow{4}{*}{4} & \multirow{4}{*}{Bracelet} & PSNR    & 14.199 & 12.666           & 22.307  & 18.616   & 28.519  & 30.117   & \textbf{33.440} \\
                     &                           & SSIM    & 0.117  & 0.423            & 0.525   & 0.502    & 0.903   & 0.941    & \textbf{0.961}  \\
                     &                           & FSIM    & 0.439  & 0.706            & 0.746   & 0.732    & 0.939   & 0.955    & \textbf{0.973}  \\
                     &                           & Time(s) & ------ & \textbf{113.726} & 628.441 & 1089.964 & 369.953 & 837.029  & 695.004         \\ \hline
  \multirow{4}{*}{5} & \multirow{4}{*}{Chess}    & PSNR    & 13.796 & 14.955           & 26.503  & 18.751   & 31.635  & 33.878   & \textbf{36.177} \\
                     &                           & SSIM    & 0.088  & 0.511            & 0.712   & 0.537    & 0.921   & 0.959    & \textbf{0.968}  \\
                     &                           & FSIM    & 0.415  & 0.721            & 0.847   & 0.746    & 0.955   & 0.970    & \textbf{0.978}  \\
                     &                           & Time(s) & ------ & \textbf{112.559} & 637.349 & 1089.744 & 368.993 & 610.272  & 617.313         \\ \hline
  \multirow{4}{*}{6} & \multirow{4}{*}{Portrait} & PSNR    & 14.849 & 10.743           & 21.713  & 18.183   & 28.226  & 30.183   & \textbf{33.390} \\
                     &                           & SSIM    & 0.154  & 0.382            & 0.469   & 0.522    & 0.920   & 0.950    & \textbf{0.969}  \\
                     &                           & FSIM    & 0.500  & 0.643            & 0.726   & 0.700      & 0.944   & 0.961    & \textbf{0.978}  \\
                     &                           & Time(s) & ------ & \textbf{99.555}  & 594.646 & 967.387  & 378.419 & 691.658  & 631.088         \\ \hline\hline
  \end{tabular}
\end{table*}

In this subsection, we demonstrate the denoising performance of LMH-BRTF on fifth-order light-field images data. We use six light-field images datasets from the Stanford Light-Field Archive\footnote{http://lightfield.stanford.edu/lfs.html}, namely, Amethyst, Bunny, Beans, Bracelet, Chess, and Portrait, respectively. The original size of these data is $3888 \times 2592 \times 3 \times 17 \times 17$. In order to reduce the computational load, we downsample these data to $216 \times 324 \times 3 \times 9 \times 9$. For each light-field images tensor, we randomly corrupt $\rho=20\%$ of its elements to random values ranging from $[0, 255]$. Following the common settings, the elements of each light-field images tensor are then normalized to $[0,1]$. Furthermore, we corrupt all the elements of the light-field images tensor by adding i.i.d. noise elements drawn from the Gaussian distribution $\mathcal{N}(0, \sigma^2)$, where we set $\sigma^{2} = 10^{-4}$.

\begin{figure}[htbp]
	\centering
	\includegraphics[scale=1,width=3.4in]{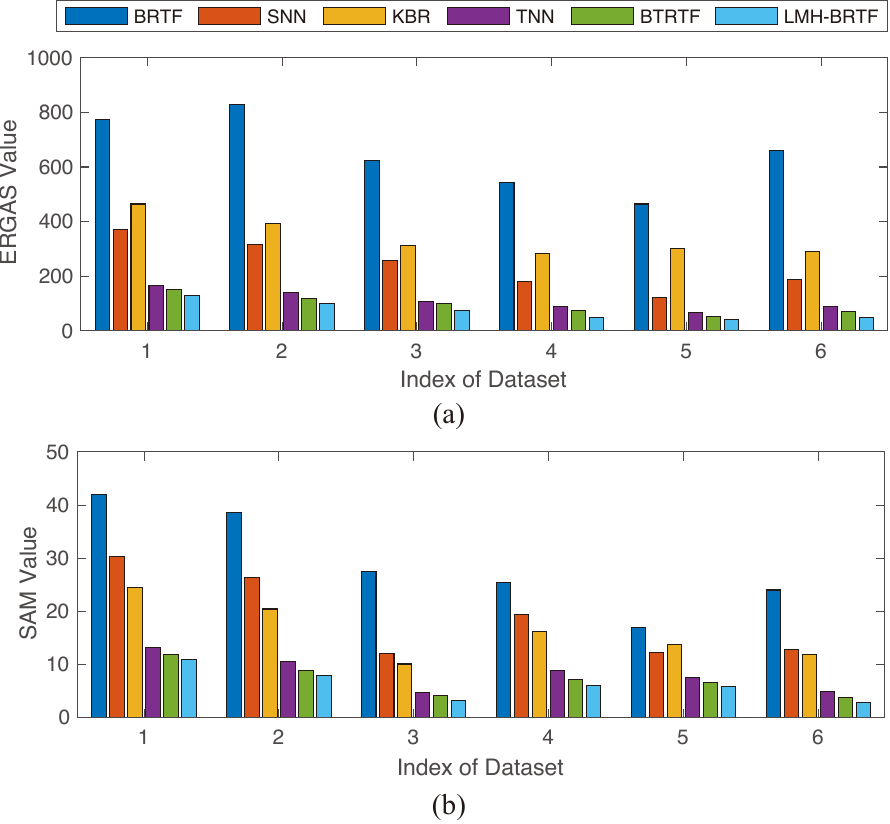}
	\caption{Comparison of light-field images denoising results on 6 datasets. (a) Comparison with respect to ERGAS. (b) Comparison with respect to SAM.}
	\label{exp3_values}
\end{figure}

\begin{figure*}[htbp]
	\centering
	\includegraphics[scale=1,width=7in]{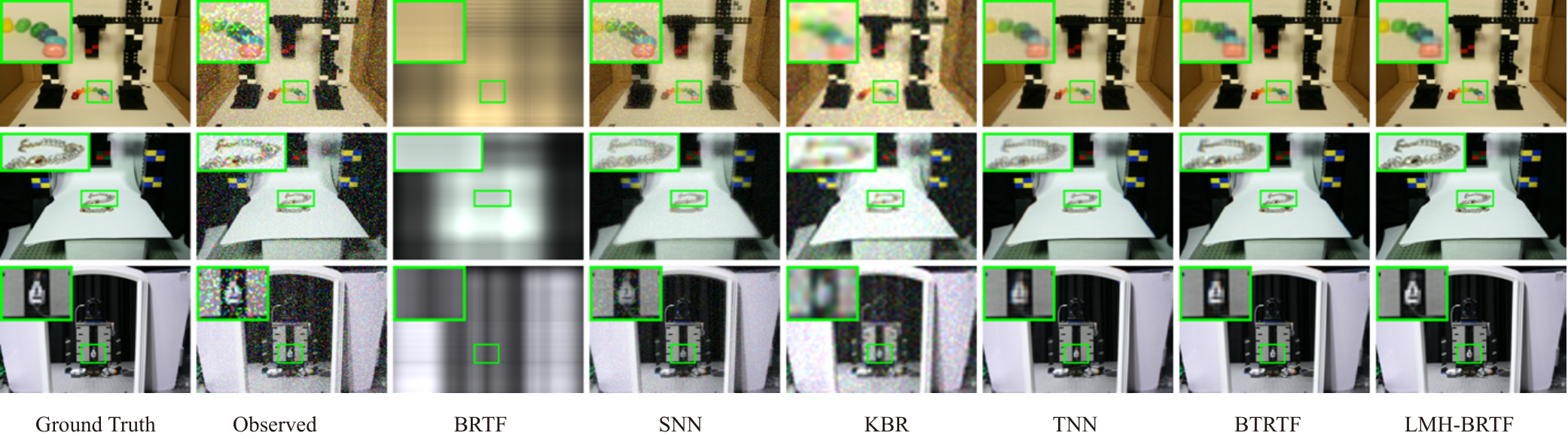}
	\caption{Light-field images denoising results on three representative datasets with $\rho=20\%$ and $\sigma^2 = 10^{-4}$. The names of the datasets from top to bottom are Beans, Bracelet, and Portrait, respectively.}
	\label{exp3_show}
\end{figure*}

Similar to the evaluation metrics in section IV.B, we also use the PSNR, SSIM\cite{SSIM}, FSIM\cite{FSIM}, ERGAS\cite{ERGAS}, SAM\cite{SAM}, and CPU time. We also compare LMH-BRTF with the TRPCA methods listed in table \ref{tab:competing methods} and set the invertible linear transform $L$ as the DFT transformation whose corresponding constant $\varphi = I_3 \times \cdots I_d$ for LMH-BRTF.

All the parameters in the competing methods are optimally tuned or set as the suggested values in the corresponding papers. Specifically, for BRTF, we set the initial rank to $\max(I_1,I_2,I_3,I_4,I_5)$; For SNN, we set the parameter $\boldsymbol{\lambda}=[\lambda_1,\lambda_2,\lambda_3,\lambda_4,\lambda_5]$ as $[15,15,1.5,400,400]$; For KBR, we set $\beta = 3000$, $\gamma = 100\beta$, $\lambda=10$, $\mu = 250$, $\rho = 1.05$; For TNN, we set the balancing parameter $\lambda=1 / \sqrt{I_{3}\times I_4 \times I_5 \times \max \left(I_{1}, I_{2}\right)}$; For BTRTF, we set the initial multi-rank to $100 \times \operatorname{ones}(I_3 \times I_4 \times I_5,1)$, and the sparse variance $\sigma_{0}^{2} = 10^{-7}$; For LMH-BRTF, we initialize the multi-rank to $100 \times \operatorname{ones}(I_3 \times I_4 \times I_5,1)$. Similar as above, we set the convergence criterion to $tol < 10^{-4}$ and the sparse variance $\sigma_{0}^{2}= 10^{-7}$.

The denoising results on six light-field images datasets are shown in table \ref{tab:tableExp3}. From these results, we have the following observations. Firstly, we can see that LMH-BRTF consistently achieves better light-field images denoising performance than the order-3 t-SVD based methods, i.e., TNN and BTRTF. This is because those order-3 t-SVD based methods should first reshape the higher-order tensors into order-3 tensors, and thus can not effectively utilize the information among dimensions. The higher-order method, LMH-BRTF, instead takes advantage of the multi-dimensional structure of data. Secondly, the denoising performance of LMH-BRTF is better than CP decomposition and Tucker decomposition based methods, i.e., BRTF, SNN and KBR. This is because those classical CP decomposition and Tucker decomposition based TRPCA methods generally lack enough ﬂexibility in characterizing tensors with complex low-rank structures. In addition, the running time of LMH-BRTF is comparable to that of TNN and BTRTF. This is because although the order-$d$ t-SVD framework performs more complex operations when dealing with higher-order tensors compared to oder-3 t-SVD, these operations can be implemented efficiently and do not impose excessive computational burdens. Moreover, BRTF is the most efficient method according to the CPU time, but it does not perform well. We also give the denoising results of the six light-field images datasets with regard to ERGAS and SAM values obtained by all algorithms in Fig. \ref{exp3_values} and these results consistently show that LMH-BRTF can achieve better light-field images denoising performance compared to other algorithms. Fig. \ref{exp3_show} further shows the visualized denoising results of each algorithm from Beans, Bracelet, and Portrait datasets, and it turns out that LMH-BRTF can obtain finer and more accurate denoising results compared to other algorithms. For example, in the Beans dataset, LMH-BRTF can obtain clearer edges of beans compared to other algorithms, especially the edge between the green and dark blue beans. In the Bracelet dataset, the details of the bracelet obtained by LMH-BRTF are more accurate and richer. In the Portrait dataset, LMH-BRTF obtains results without blurring and ghosting.

\section{Conclusion}

In this article, a low-multi-rank high-order Bayesian robust tensor factorization method LMH-BRTF is proposed for solving the order-$d$ TRPCA problem. In this method, we assume that the observed corrupted tensor can be decomposed into three parts, i.e., the low-rank component, the sparse component, and the noise component. We construct a low-rank model for the low-rank component based on order-$d$ t-SVD and impose a proper prior for the model, so that LMH-BRTF can automatically determine the multi-rank of an order-$d$ tensor. In the meanwhile, we explicitly model the noises besides the sparse noise, so that LMH-BRTF can leverage information from the noises, which leads to improved performance of TRPCA. Comprehensive synthetic and real-world data experiments and comparisons demonstrate the superiority of LMH-BRTF.


%

\ifCLASSOPTIONcaptionsoff
  \newpage
\fi



\bibliography{IEEEfull}

%








\end{document}